\journalname{Procedia Computer Science}
\newcommand{\pref}[1]{\prettyref{#1}}
\newcommand{\savehyperref}[2]{\texorpdfstring{\hyperref[#1]{#2}}{#2}}
\newcommand{\calL}{\mathcal{L}}
\newcommand{\calG}{\mathcal{G}}
\newcommand{\calA}{\mathcal{A}}
\newcommand{\calS}{d}
\newcommand{\calM}{\mathcal{M}}
\newcommand{\calW}{\mathcal{W}}
\newcommand{\calU}{\mathcal{U}}
\newcommand{\calRO}{\mathcal{O}}
\newcommand{\R}{\mathbb{R}}
\newcommand{\calD}{\mathcal{D}}
\newcommand{\calO}{\mathcal{B}}
\newcommand{\E}{\mathbb{E}}
\newcommand{\wtilde}{\widetilde}
\newcommand{\calX}{\mathcal{X}}
\newcommand{\calY}{\mathcal{Y}}
\newcommand{\epspm}{\epsilon_{p,m}}
\newtheorem{assumption}{Assumption}[section]
\newcommand*{\circled}[1]{\lower.7ex\hbox{\tikz\draw (0pt, 0pt)%
    circle (.5em) node {\makebox[1em][c]{\small #1}};}}
\newtheorem{thm}{Theorem}[section]
\newtheorem{lem}[thm]{Lemma}
\newcommand{\protector}{}
\newtheorem{defi}{Definition}[section]
\newcommand{\RcalD}{\mathcal{D}}
\newtheorem*{remark}{Remark}
\begin{document}

\begin{frontmatter}



\renewcommand{\thefootnote}{\fnsymbol{footnote}}

\dochead{}

\title{Bridging Privacy and Robustness for Trustworthy Machine Learning}


\author[label1]{Xiaojin Zhang
\footnote{National Engineering Research Center for Big Data Technology and System, Services Computing Technology and System Lab, Cluster and Grid Computing Lab, School of Computer Science and Technology, Huazhong University of Science and Technology, Wuhan, 430074, China}}
\address[label1]{Huazhong University of Science and Technology, China}
\author[label1]{Wei Chen}

\begin{abstract}
      The widespread adoption of machine learning necessitates robust privacy protection alongside algorithmic resilience. While Local Differential Privacy (LDP) provides foundational guarantees, sophisticated adversaries with prior knowledge demand more nuanced Bayesian privacy notions, such as Maximum Bayesian Privacy (MBP) and Average Bayesian Privacy (ABP), first introduced by \cite{zhang2022no}. Concurrently, machine learning systems require inherent robustness against data perturbations and adversarial manipulations. This paper systematically investigates the intricate theoretical relationships among LDP, MBP, and ABP. Crucially, we bridge these privacy concepts with algorithmic robustness, particularly within the Probably Approximately Correct (PAC) learning framework. Our work demonstrates that privacy-preserving mechanisms inherently confer PAC robustness. We present key theoretical results, including the formalization of the established LDP-MBP relationship, novel bounds between MBP and ABP, and a proof demonstrating PAC robustness from MBP. Furthermore, we establish a novel theoretical relationship quantifying how privacy leakage directly influences an algorithm's input robustness. These results provide a unified theoretical framework for understanding and optimizing the privacy-robustness trade-off, paving the way for the development of more secure, trustworthy, and resilient machine learning systems.
\end{abstract}

\begin{keyword}
  federated learning \sep privacy-utility tradeoffs, privacy-preserved machine learning



\end{keyword}

\end{frontmatter}



\section{Introduction}
The advent of machine learning (ML) has ushered in transformative changes across diverse domains—from healthcare diagnostics to financial modeling and autonomous systems—driven by its unprecedented capability to extract meaningful insights from complex datasets \citep{ahmed2022artificial, habehh2021machine}. However, this pervasive reliance on data, particularly sensitive information, raises critical concerns regarding privacy and security \citep{shokri2017membership}. Simultaneously, as ML models are increasingly deployed in dynamic real-world environments, their reliability under input inaccuracies or noise—quantified as \emph{algorithmic robustness}—has become paramount \citep{sehwag2019analyzing, xu2012robustness}. Building trustworthy, deployable AI systems necessitates addressing both challenges holistically.

The traditional paradigm of Local Differential Privacy (LDP) provides a cornerstone for privacy preservation, bounding the influence of any individual data point on computational outputs to obscure its presence \citep{dwork2008differential}. Particularly valuable in distributed settings where users perturb data locally, LDP offers strong theoretical guarantees. Yet sophisticated adversaries may exploit additional attack vectors. Recent research introduces Bayesian privacy frameworks—Average Bayesian Privacy (ABP) and Maximum Bayesian Privacy (MBP)—that model adversaries with prior knowledge and Bayesian inference capabilities \citep{zhang2022no}. These frameworks deliver more comprehensive protection by accounting for realistic adversarial strategies. These critical domains have largely evolved independently of algorithmic robustness research, creating substantial theoretical and practical gaps: (i) a lack of unified principles connecting nuanced privacy guarantees with robustness, and (ii) mechanisms that optimize one property at the expense of the other, while yielding suboptimal trade-offs. Bridging these gaps is essential for establishing trustworthy next-generation AI foundations.

Our work addresses these challenges through fundamental theoretical innovations that establish quantifiable synergies between privacy preservation and algorithmic robustness. Our contributions are threefold:
\begin{itemize}
    \item {First}, we derive the inaugural quantitative relationship between MBP and ABP (Theorem~\ref{thm:relation_avg_and_max_privacy}). This result precisely bounds expected privacy leakage (ABP) as a function of worst-case protection (MBP), demonstrating that strengthening MBP inherently enhances ABP—providing crucial mechanism-design guidance.
    \item {Second}, we formally establish that privacy-preserving mechanisms inherently confer Probably Approximately Correct (PAC) robustness (Theorem~\ref{thm: second_main_result_mt}). This profound equivalence proves an $(\epsilon,\beta,\alpha)$-MBP mechanism guarantees $\alpha$-PAC-Robustness, with quantifiable confidence degradation $\propto\epsilon\beta$, thereby bridging privacy and stability domains.
    \item {Third}, we reveal a novel mathematical duality between privacy leakage and input robustness (Theorem~\ref{thm: first_main_result}). Our analysis demonstrates that intentional parameter distortion ($\Delta$) from privacy mechanisms directly amplifies adversarial resilience ($\alpha$), enabling principled joint optimization via strategic noise injection.
\end{itemize}

Collectively, these contributions establish a theoretical foundation for developing ML systems that simultaneously resist privacy reconstruction attacks and maintain performance under data perturbations. Our framework delivers: (1) quantitative conversions among Bayesian privacy metrics, (2) formal privacy-to-robustness implications, and (3) pathways for unified optimization.

The remainder of this paper is organized as follows. Section \ref{sec:related_work} reviews related work on privacy measurements, relationships among privacy metrics, intersections of privacy and PAC learning, robustness measurements, and the relationship between privacy and robustness. In Section \ref{sec: framework}, we introduce our privacy attack and defense framework. In Section \ref{sec: connection_privacy_and_pac_robustness}, we define metrics for privacy and robustness, and establish theoretical connections between privacy notions and PAC robustness. We also introduce practical estimation approaches and analyses for MBP and ABP in \pref{appendix:measure_mbp} and \pref{appendix:measure_abp}. Finally, we conclude the paper and discuss future research directions.

For convenience, we list the important notations used throughout this paper in Table \ref{table: notation}.
\begin{longtable}{cc}
\caption{Table of Notation} \label{table: notation} \\
\toprule
Notation & Meaning \\
\midrule
$\epsilon_p$ & privacy leakage of the protector \\
$\calL$ & loss function \\
$\Delta$ & distortion extent \\
$\breve W$ & the original model parameter \\
$W$ & the released model parameter \\
$\calX$ & a set of instances \\
$\calY$ & a set of labels \\
$P$ & distribution over instances $\calX$ \\
$\calS$ & dataset \\
TV(·) & total variation distance \\
$\mathcal{A}$(·) & original model \\
$\mu$/$\mu(P)$ & statistic of distribution $P$ \\
$\mathcal{D}'$ & another dataset different from $\mathcal{D}$ \\
$n$ & size of dataset $\calD$\\
$L$ & Lipschitz Constant \\
$I$ & number of optimization rounds \\
$F^\mathcal{A}$ & belief distribution after observing protected information\\
$F^\mathcal{B}$ & belief distribution before observing protected information\\
$F^\mathcal{O}$ & belief distribution after observing unprotected information\\
$f^\mathcal{O}_D(d),f^\mathcal{B}_D(d),f^\mathcal{A}_D(d)$ & density function of $F^\mathcal{O},F^\mathcal{B},F^\mathcal{A}$\\
$d_H$(·,·) & Hamming distance \\
$d^{(m)}$ & m-th data in dataset\\
$\xi$ & maximum bayesian privacy constant\\
$f_{D|W}(d|w)$ & posterior probability density function \\
$f_D(d)$ & prior distribution\\
$f_{W|D}(w|d)$ & likelihood function based on observed parameter $W$\\
$\kappa_1$ & $f_{D}^\mathcal{O}(d)$ \\
$\hat{\kappa_1}$ & estimator of $\kappa_1$\\
$\kappa_2$ & $f_{D}^{\mathcal{B}}(d)$\\
$\hat{\kappa_2}$ & estimator of $\kappa_2$\\
$\kappa_3$ & MBP, i.e., $\xi$\\
$\hat{\kappa_3}$ & estimator of $\kappa_3$\\
\bottomrule
\end{longtable}

\section{Related Work}
\label{sec:related_work}

\subsection{Privacy Measurements}
\label{sec:related_DP}

The rigorous formalization of differential privacy (DP) by \cite{dwork2006calibrating} established a gold standard for privacy-preserving data analysis, catalyzing extensive research into privacy quantification. Subsequent works have expanded this foundation along several dimensions: computational feasibility, distributional awareness, and practical applicability. \cite{abadi2016deep}'s pioneering work demonstrated how deep neural networks could be trained under constrained privacy budgets by integrating DP-SGD with optimization techniques like Momentum \cite{rumelhart1986learning} and AdaGrad \cite{duchi2011adaptive}. This approach balanced the tension between model utility and privacy preservation, enabling privacy-sensitive deep learning applications.

Bayesian perspectives introduced significant advancements in privacy measurement. \cite{BDP-icml20} developed Bayesian Differential Privacy (BDP), incorporating data distribution considerations into the privacy calculus while establishing formal connections to traditional DP guarantees. Information-theoretic approaches emerged as powerful alternatives, with \cite{rassouli2019optimal} employing total variation distance for privacy measurement while assessing utility through minimum mean-square error, mutual information, and error probability. Similarly, \cite{Blauw2017metrics} proposed entropy-based metrics for quantifying online user privacy, acknowledging the contextual nature of privacy risks.

The practical challenges of achieving pure DP motivated alternative quantification frameworks. \cite{eilat2020bayesian} formalized privacy loss through relative entropy (Kullback-Leibler divergence), measuring information leakage via posterior belief updates. While \cite{foulds2016theory} identified limitations in posterior sampling for privacy protection, they demonstrated the effectiveness of Laplace mechanisms in Bayesian inference contexts. Social network privacy inspired novel metrics like Audience and Reachability \cite{Alemany2019sharing}, while \cite{du2012privacy} pioneered minimax formulations for information leakage using KL divergence.

Our work contributes to this landscape by adopting Jensen-Shannon (JS) divergence for privacy measurement, following \cite{zhang2022no,zhang2022trading}. This choice is theoretically grounded in JS divergence's symmetry and triangle inequality properties \cite{nielsen2019jensen, endres2003new}, which enable more nuanced quantification of privacy-utility trade-offs. This approach provides distinct advantages for analyzing the privacy-robustness relationships central to our framework.

\subsection{Relation between Distinct Privacy Measurements}
\label{sec:related_relation_PM_RM}

The evolution of differential privacy has spurred development of nuanced privacy metrics with complex interrelationships, as summarized in Table \ref{table: relation_privacy_robustness}. Several significant conceptual frameworks have emerged:

\begin{table}[h!]
\begin{center}
\renewcommand\arraystretch{1.5}
    \caption{Taxonomy of privacy metrics and their theoretical relationships}
    \begin{threeparttable}
    \begin{tabular}{lll}
    \toprule
    \textbf{Reference} & \textbf{Privacy Metric}\tnote{1} & \textbf{Relation To Other Metrics}\\
    \midrule
    \cite{dwork2016concentrated} & Concentrated DP & DP $\Rightarrow$ Concentrated DP \\
    \cite{mironov2017renyi} & Rényi-DP & Rényi-DP $\Rightarrow$ $(\epsilon,\delta)$-DP\\
    \cite{mironov2009computational}\tnote{2} & IND-CDP, SIM-CDP & SIM-CDP $\Rightarrow$ SIM$_{\forall\exists}$-CDP\tnote{3} $\Rightarrow$ IND-CDP\\
    \cite{dong2019gaussian} & $f$-DP & Gaussian DP\tnote{4} $\Rightarrow$ Rényi-DP \\
    \cite{blum2013learning} & Distributional Privacy & Distributional Privacy $\Rightarrow$ DP\\
    \cite{triastcyn2020bayesian} & Bayesian-DP & Bayesian DP $\Rightarrow$ DP \\
    \cite{cuff2016differential} & Mutual-Information DP & MI-DP $\Leftrightarrow$ DP (asymptotically)\\
    Our Work & $\xi$-MBP, $\epsilon$-ABP & MBP $\Rightarrow$ ABP\\
    \bottomrule
    \end{tabular}
    \begin{tablenotes}
        \footnotesize
        \item[1] All "DP"s denote differential privacy variants
        \item[2] Introduces Computational Differential Privacy (CDP)
        \item[3] SIM$_{\forall\exists}$-CDP extends SIM-CDP
        \item[4] Gaussian DP is a specific $f$-DP instance
      \end{tablenotes}
  \end{threeparttable}    
  \label{table: relation_privacy_robustness}
\end{center}
\end{table}

\textbf{Refinements of DP:} \cite{dwork2016concentrated} introduced Concentrated DP, focusing on mean privacy leakage. \cite{mironov2017renyi}'s Rényi-DP unified various DP relaxations through $\alpha$-divergences. Computational perspectives emerged with \cite{mironov2009computational}'s IND-CDP and SIM-CDP, which formalized privacy against bounded adversaries. \cite{dong2019gaussian} advanced $f$-DP and Gaussian DP, providing tighter composition bounds.

\textbf{Distribution-Aware Privacy:} \cite{blum2013learning} proposed Distributional Privacy, offering stronger protection than traditional DP through synthetic data generation. \cite{triastcyn2020bayesian} introduced Bayesian DP that incorporates dataset distributions while maintaining DP compatibility. Local DP solutions addressed scenarios without trusted third parties \cite{cormode2018privacy}, with convergence-privacy trade-offs analyzed by \cite{duchi2013local} and practical mechanisms developed by \cite{ding2017collecting}.

\textbf{Taxonomic Frameworks:} Several studies systematically compared privacy notions. \cite{cuff2016differential} established asymptotic equivalence between Mutual-Information DP and traditional DP. \cite{meiser2018approximate} clarified distinctions between approximate and probabilistic DP. \cite{desfontaines2019sok} classified privacy definitions into seven dimensions with partial ordering. \cite{guingona2023comparing} compared approximate and probabilistic DP, while \cite{unsal2023information} connected DP to information-theoretic measures.

Our contribution introduces Maximum Bayesian Privacy (MBP) and Average Bayesian Privacy (ABP) as novel Bayesian privacy metrics. Crucially, we integrate the established equivalence between Local DP and Maximum Bayesian Privacy from \cite{jiang2018context} into our framework, while demonstrating new connections to robustness. Specifically:
\begin{itemize}
    \item We formalize the known relationship: $\xi$-LDP $\Rightarrow$ $\xi$-MBP $\Rightarrow$ $2\xi$-LDP \cite{jiang2018context}
    \item We prove a novel quantitative relationship: MBP $\Rightarrow$ ABP
\end{itemize}
This synthesis bridges Bayesian privacy frameworks with traditional DP concepts, providing new perspectives for adversarial settings where prior knowledge affects privacy risks. Our taxonomy complements existing work by: (1) Introducing Bayesian metrics sensitive to prior distributions, (2) Establishing quantitative conversions between Bayesian privacy notions (MBP and ABP), and (3) Enabling principled algorithm design for contexts with informed adversaries.

\subsection{Intersections of Privacy and PAC Learning}
\label{sec:privacy_pac}

The convergence of privacy guarantees and Probably Approximately Correct (PAC) learning frameworks has generated profound theoretical insights. Seminal work by \cite{kasiviswanathan2011can} established that most concept classes learnable non-privately remain learnable under DP constraints, albeit with polynomial sample complexity increases. Their discovery of equivalence between local private learning and statistical query models demonstrated privacy need not fundamentally limit learnability.

Computational aspects were explored by \cite{bun2020computational}, who delineated connections between private PAC learning and online learning paradigms. \cite{feldman2014sample} revealed a profound equivalence between differentially private sample complexity and randomized one-way communication complexity, bridging two distinct theoretical domains. Practical implementations advanced with \cite{agarwal2017price}'s differentially private online convex optimization that maintained near-optimal regret bounds.

Our work extends this research trajectory by investigating how privacy-preserving mechanisms inherently confer PAC robustness. While prior studies focused primarily on learnability and sample complexity, we establish formal connections between Bayesian privacy frameworks (MBP/ABP) and resilience to data perturbations. This provides new theoretical foundations for developing algorithms that simultaneously ensure: (1) Provable privacy against reconstruction attacks, and (2) Certified robustness against adversarial inputs--dual objectives previously addressed separately.

\subsection{Robustness Measurements}
\label{sec:related_RM}

Robustness quantification has evolved along two primary dimensions: adversarial resistance and distributional stability. \cite{hendrycks2019benchmarking} introduced the critical distinction between corruption robustness (average-case performance under natural corruptions) and perturbation robustness (targeted adversarial resistance), framing robustness as a multi-faceted requirement. For neural networks, \cite{bastani2016measuring} formalized point-wise robustness through adversarial frequency and severity metrics, enabling granular vulnerability assessment.

Theoretical foundations advanced with \cite{zhang2019theoretically}'s decomposition of robust classification error into natural and boundary components, elucidating accuracy-robustness trade-offs. Practical certification methods emerged through \cite{cohen2019certified}'s randomized smoothing approach, which scaled certifiable robustness to ImageNet-scale problems. These developments collectively established robustness measurement as a rigorous subfield with both theoretical depth and practical applicability.

Our robustness framework builds upon these foundations while introducing PAC-formalized robustness guarantees that explicitly connect to privacy-preserving mechanisms. This dual perspective enables unified analysis of algorithms providing both privacy and robustness certificates.

\subsection{Relation between Robustness and Privacy}
\label{sec:related_relation_PR}

The intrinsic connection between algorithmic privacy and robustness has emerged as a fundamental research frontier, with key relationships summarized in Table \ref{table: relation_privacy_robustness}. Early connections were identified by \cite{Dwork2009statistics}, who demonstrated how robust statistical estimators naturally enable differentially private estimation through the Propose-Test-Release paradigm. \cite{pinot2019unified} later established formal equivalences between Rényi-DP and distributional robustness through $D_\lambda$ divergences.

Three significant research threads have developed:
\begin{enumerate}
    \item \textbf{Privacy-to-Robustness:} \cite{lecuyer2019certified}'s PixelDP framework demonstrated how $(\epsilon,\delta)$-DP mechanisms can provide certified robustness guarantees. \cite{bansal2020extending} introduced robust privacy as a generalized indistinguishability criterion.
    
    \item \textbf{Robustness-to-Privacy:} \cite{asi2023robustness} showed robust estimators can be transformed into private estimators via $\rho$-smooth inverse sensitivity mechanisms. \cite{hopkins2023robustness} established black-box reductions from robustness to privacy with optimal trade-offs.
    
    \item \textbf{Federated Settings:} \cite{pillutla2022robust} developed median-based aggregation providing simultaneous privacy, robustness, and communication efficiency in federated learning, though with communication trade-offs.
\end{enumerate}

Our work advances all three directions through several key contributions:
\begin{itemize}
    \item We establish bidirectional implications: $\text{privacy} \Leftrightarrow \text{robustness}$ for MBP and $\text{robustness} \Rightarrow \text{privacy}$ for ABP
    \item Introduce PAC-formalized robustness with explicit parameter relationships to privacy guarantees
    \item Develop quantitative conversion bounds between privacy and robustness parameters
    \item Provide the first unified framework connecting Bayesian privacy with PAC robustness
\end{itemize}
These contributions offer new theoretical tools for co-designing algorithms that simultaneously optimize privacy and robustness--a critical requirement for trustworthy AI systems in sensitive domains.

\begin{table}[h!]
\begin{center}
\renewcommand\arraystretch{1.5}
\scriptsize
    \caption{Theoretical relationships between privacy and robustness frameworks}
    \begin{threeparttable}
    \begin{tabular}{p{0.2\textwidth}p{0.2\textwidth}p{0.2\textwidth}p{0.3\textwidth}}
    \toprule
    \textbf{Reference} & \textbf{Privacy Metric} & \textbf{Robustness Metric} & \textbf{Theoretical Relation}\\\midrule
    \cite{Dwork2009statistics} & $(\epsilon,\delta)$-DP & M-estimators & Propose-Test-Release \\
    & & Robust covariance & paradigm \\
    & & Robust regression & \\
    \midrule
    \cite{lecuyer2019certified} & $(\epsilon,\delta)$-PixelDP & Certifiable robustness & \((\epsilon,\delta)\)-DP + conditions \\
    & & & \(\Rightarrow\) robustness \\
    \midrule
    \cite{pinot2019unified} & Rényi-DP & $D_{\mathcal{P}(\gamma)}$-$(\alpha,\epsilon,\gamma)$ robust & Equivalence: \\
    & & & $D_{\mathcal{P}(\gamma)}$-robust \(\Leftrightarrow\) Rényi-DP \\
    \midrule
    \cite{bansal2020extending} & Flexible Accuracy & Laplace Mechanism & Duality: \\
    & Robust Privacy & $M_{Lap}^{f,b}$ & robustness \(\Leftrightarrow\) privacy \\
    \midrule
    \cite{pillutla2022robust} & Secure aggregation & Median breakdown point & Compatibility requires \\
    & & & increased communication\tnote{1} \\
    \midrule
    \cite{hopkins2023robustness} & $(\epsilon,\delta)$-DP & $(\eta,\beta,\alpha)$-robust & Robustness \(\Leftrightarrow\) privacy \\
    \midrule
    \cite{asi2023robustness} & $\epsilon$-DP & $(\tau,\beta,\alpha)$-robust & Privacy $\Rightarrow$ robustness\tnote{2} \\
    & $(\epsilon,\delta)$-DP & $(\tau,\beta,\alpha)$-robust & Robustness$\Rightarrow$privacy\tnote{3} \\
    \midrule
    Our work & $\xi$-MBP & $(\tau,\beta,\alpha)$-robust & Bidirectional: \\
    & $\epsilon$-ABP & $(\tau,\beta,\alpha)$-robust & privacy $\Leftrightarrow$ robustness (MBP) \\
    & & & robustness $\Rightarrow$ privacy (ABP) \\
    \bottomrule
    \end{tabular}
    \begin{tablenotes}
        \footnotesize
        \item[1] No quantitative relationship established
        \item[2] When corruption ratio $\tau \approx \log(n)/n\epsilon$
        \item[3] When $\tau \geq \frac{2(d \cdot \log(\frac{R}{\alpha_0}+1)+\log \frac{2}{\beta})}{n\epsilon}$
      \end{tablenotes}
  \end{threeparttable}    
  \label{table: relation_privacy_robustness}
\end{center}
\end{table}

\section{Privacy Attack and Defense Framework}\label{sec: framework}

This section formalizes the adversarial ecosystem inherent in machine learning data privacy. We systematically establish a comprehensive threat model characterizing adversarial objectives, capabilities, knowledge structures, and methodologies, while concurrently delineating defensive countermeasures for sensitive information protection. This dual-aspect framework builds upon extensive prior research \cite{zhang2023towards, zhang2023game, kang2022framework, zhang2023theoretically}.

\subsection{Privacy Attack Model}\label{subsec:threat_model}

The threat model rigorously conceptualizes vulnerabilities compromising client data privacy through four adversarial dimensions:

\paragraph{Attacker’s Objective}
Our framework models a protector operating algorithm $\mathcal{M}(\cdot): \mathcal{D} \rightarrow \breve{W}$ and an adversarial recipient aiming to infer confidential aspects of $\mathcal{D}$ from $\breve{W}$ while maintaining functional utility. Formally, the adversary seeks to refine posterior beliefs $\mathbb{P}(\mathcal{D}|\breve{W})$ through analytical scrutiny of the observable output.

\paragraph{Attacker’s Capability}
We assume a semi-honest ("honest-but-curious") adversary who faithfully executes protocol specifications while illicitly reconstructing $\mathcal{D}$ through statistical analysis of $\breve{W}$, maintaining operational integrity throughout the privacy attack.

\paragraph{Attacker's Knowledge}
The adversary possesses access to the sanitized output $W$ (the transformed version of $\breve{W}$), potential prior knowledge of $\mathcal{D}$ obtained through historical breaches, and understanding of the utility constraints governing analytical operations.

\paragraph{Attacker's Method}
Given observation $W$, the attacker implements Bayesian inference to maximize posterior belief:
\begin{equation}
    \arg\max_{d} \log f_{D|W}(d|w) = \arg\max_{d} \left[ \underbrace{\log f_{W|D}(w|d)}_{\text{log-likelihood}} + \underbrace{\log f_{D}(d)}_{\text{log-prior}} \right]
\end{equation}
where $f_{D|W}$, $f_{W|D}$, and $f_{D}$ denote the posterior density, likelihood function, and prior distribution respectively.

\subsection{Privacy Protection Model}

We articulate defensive measures against unauthorized disclosure through four protector dimensions:

\paragraph{Protector’s Objective}
The protector engineers a stochastic privacy-preserving mechanism $\mathcal{P}: \breve{W} \rightarrow W$ to generate a carefully degraded information derivative $W$ that satisfies formal privacy guarantees under rigorous information-theoretic metrics.

\paragraph{Protector's Capability}
The protector possesses operational capacity to generate and transmit $W$ instead of $\breve{W}$, obscuring $\mathcal{D}$ through controlled perturbation while contending with potential information leakage from historical disclosures.

\paragraph{Protector's Knowledge}
The protector maintains complete knowledge of the sensitive dataset $\mathcal{D}$, pristine algorithm output $\breve{W}$, computational semantics of $\mathcal{M}(\cdot)$, operational parameters of $\mathcal{P}(\cdot)$, and crucially, the adversary's utility constraints.

\paragraph{Protector's Method}
To safeguard $\mathcal{D}$-confidentiality, the protector deploys a randomized mechanism $\mathcal{P}(\breve{W})$ that produces $W$ satisfying dual objectives:
\begin{align*}
    \mathbb{E}[\mathcal{U}(W)] & \geq \tau & &\text{(Utility preservation)} \\
    \mathcal{I}(\mathcal{D};W) & \leq \epsilon & &\text{(Privacy guarantee)}
\end{align*}
where $\mathcal{U}$ represents the utility function, $\tau$ denotes the minimum utility threshold, $\mathcal{I}$ quantifies privacy leakage, and $\epsilon$ establishes the privacy budget. This framework achieves a provable trade-off between data utility and confidentiality protection, implemented through optimized perturbation algorithms \cite{zhang2023theoretically}.

\section{Connecting Privacy Measurements and PAC Robustness in Machine Learning}\label{sec: connection_privacy_and_pac_robustness}

This section establishes the theoretical foundations connecting privacy frameworks with robustness guarantees in machine learning. We systematically investigate the mathematical relationships between Maximum Bayesian Privacy (MBP), Average Bayesian Privacy (ABP), and Local Differential Privacy (LDP), while demonstrating how these privacy notions fundamentally relate to Probably Approximately Correct (PAC) robustness. Through rigorous definitions and theorems, we prove equivalence relations between privacy frameworks under specific conditions and derive quantitative bounds for converting between privacy metrics. Crucially, we establish that privacy-preserving mechanisms inherently confer robustness properties, providing a unified theoretical framework for designing algorithms that simultaneously ensure data confidentiality and resilience to perturbations. These results bridge the gap between privacy and robustness research communities, offering new principles for developing trustworthy learning systems.

\subsection{Measurements for Bayesian Privacy}

This subsection formalizes the Bayesian privacy framework for quantifying inference risks in data publishing. We introduce three complementary privacy metrics--Maximum Bayesian Privacy (MBP), Average Bayesian Privacy (ABP), and their relationship to Local Differential Privacy (LDP)--that measure protection against training data reconstruction attacks. These metrics capture different aspects of privacy risk: MBP quantifies worst-case vulnerability, ABP measures expected privacy loss, and LDP provides distributional guarantees. The definitions establish a rigorous mathematical foundation for evaluating privacy-preserving mechanisms, enabling precise comparison between protection techniques and formal analysis of privacy-utility trade-offs.

\begin{defi}[Average Bayesian Privacy]\label{defi: average_privacy_JSD}
Let $\calM$ represent the protection mechanism mapping private information $D$ to released information $W$, and $\calG$ represent the attacker's inference mechanism. The privacy leakage $\epsilon_{p,a}$ is quantified using the square root of Jensen-Shannon divergence between:
\begin{align}
\epsilon_{p,a}(\calM, \calG) = \sqrt{{\text{JS}}(F^{\calA} \parallel F^{\calO})}
\end{align}
where $F^{\calA}$ denotes the attacker's posterior belief about $D$ after observing $W$, $F^{\calO}$ represents the prior belief without observation, and $F^{\calM} = \frac{1}{2}(F^{\calA} + F^{\calO})$ serves as the reference distribution. The mechanism satisfies $\epsilon$-ABP when $\epsilon_{p,a} \leq \epsilon$.

\textbf{Remark:}\\
For discrete $D$, the JS divergence decomposes as:
\begin{align*}
{\text{JS}}(F^{\calA} \parallel F^{\calO}) = \frac{1}{2}\left[\text{KL}\left(F^{\calA} \parallel F^{\calM}\right) + \text{KL}\left(F^{\calO} \parallel F^{\calM}\right)\right]
\end{align*}
\end{defi}

This definition provides a symmetric, bounded measure of privacy leakage that overcomes limitations of KL divergence. By quantifying the information gain between prior and posterior beliefs, it captures the effectiveness of privacy protection against Bayesian inference attacks. The JS divergence formulation ensures the metric is well-defined, symmetric, and satisfies the triangle inequality, enabling rigorous composition and comparison of privacy mechanisms.

\begin{defi}[$(\epsilon,\beta,\alpha)$-ABP]\label{defi: ABP}
A mechanism $\calM$ achieves $(\epsilon, \beta, \alpha)$-Average Bayesian Privacy for distribution $P$ if: 
(1) It satisfies $\epsilon$-ABP ($\epsilon_{p,a} \leq \epsilon$), 
(2) It maintains standard accuracy $\alpha$ with confidence $1-\beta$.
\end{defi}

This definition establishes a comprehensive privacy-accuracy benchmark, requiring mechanisms to simultaneously bound expected privacy leakage while preserving utility. The joint requirement formalizes the dual objectives of privacy-preserving data analysis, where protection must not come at the expense of excessive accuracy degradation.

The following definition formalizes worst-case privacy vulnerability against informed adversaries:

\begin{defi}[$\xi$-Maximum Bayesian Privacy]\label{defi: bayesian_privacy}
A system provides $\xi$-Maximum Bayesian Privacy if for all $w \in \mathcal{W}$ and $d \in \mathcal{D}$:
\begin{align*}
e^{-\xi} \leq \frac{f_{D|W}(d|w)}{f_{D}(d)} \leq e^{\xi}
\end{align*}
\end{defi}

This metric quantifies the maximum possible privacy violation by bounding the multiplicative change between prior and posterior beliefs. Smaller $\xi$ values indicate stronger protection against reconstruction attacks, with $\xi=0$ corresponding to perfect privacy where observations reveal no additional information.

\begin{defi}[PAC Standard Accuracy]\label{def:standard_accuracy}
A mechanism $\calM$ estimating statistic $\mu$ achieves standard accuracy $\alpha$ with confidence $1-\beta$ if:
\begin{align}
\|\calM(\calS) - \mu\| \leq \alpha
\end{align}
holds with probability at least $1-\beta$.
\end{defi}

This definition formalizes utility preservation within the Probably Approximately Correct framework, requiring estimates to remain close to true values with high probability.

\begin{defi}[$(\epsilon,\beta,\alpha)$-MBP]\label{defi:PACPrivacy}
Mechanism $\calM$ achieves $(\epsilon, \beta, \alpha)$-Maximum Bayesian Privacy for distribution $P$ if: 
(1) It satisfies $\epsilon$-MBP per Definition~\ref{defi: bayesian_privacy}, 
(2) It achieves standard accuracy $\alpha$ with confidence $1-\beta$.
\end{defi}

This combined criterion establishes a rigorous standard for privacy-preserving estimation, requiring both worst-case privacy protection and utility preservation simultaneously.

\subsection{Measurements for PAC Robustness}

We now formalize robustness within the PAC framework, complementing the privacy definitions. This robustness metric quantifies algorithmic stability against dataset perturbations, which is crucial for deployment in adversarial environments or non-stationary data distributions.

\begin{defi}[PAC Robust Accuracy]\label{defi: robust_pac}
A mechanism $\calM$ estimating statistic $\mu$ achieves robust accuracy $\alpha$ with confidence $1-\beta$ if for all perturbed datasets $\calS'$:
\begin{align}
\|\calM(\calS') - \mu(P)\| \leq \alpha
\end{align}
holds with probability at least $1-\beta$ over both mechanism randomness and dataset perturbations. We say $\calM$ is $\alpha$-PAC-Robust with confidence $1-\beta$.
\end{defi}

This definition requires algorithms to maintain accuracy under arbitrary dataset perturbations, formalizing resilience against data corruption, adversarial examples, and distribution shifts. The probabilistic guarantee ensures reliability even when facing worst-case data manipulations.

\subsection{Theoretical Connections between Privacy Notions}\label{sec: relation_ldp_mbp_abp}

We now establish fundamental equivalence relationships between privacy frameworks. These theorems enable translation between privacy metrics and provide theoretical foundations for comparing protection guarantees across different privacy paradigms.

The following lemma establishes the equivalence between Maximum Bayesian Privacy and Local Differential Privacy under bounded prior distributions:

\begin{lem}[Generalized Relationship between MBP and LDP adapted from theorem 1 of ~\cite{jiang2018context}]\label{lem: reltion_BP_DP_mt}
For any mechanism $\calM$:
\begin{itemize}
    \item If $\calM$ satisfies $\xi$-LDP, then it provides $\xi$-MBP
    \item If $\calM$ satisfies $\xi$-MBP, then it provides $2\xi$-LDP
\end{itemize}
\end{lem}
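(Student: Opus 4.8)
The plan is to unwind both privacy definitions into statements about the ratio $f_{D|W}(d|w)/f_D(d)$ and then exploit Bayes' rule, which rewrites this posterior-to-prior ratio in terms of the likelihood ratio that governs LDP. Recall that LDP (here stated for the mechanism $\calM$ acting on the dataset as neighbouring input) asks that for all outputs $w$ and all pairs of neighbouring datasets $d, d'$ we have $e^{-\xi} \le f_{W|D}(w|d)/f_{W|D}(w|d') \le e^{\xi}$, while $\xi$-MBP (Definition~\ref{defi: bayesian_privacy}) asks $e^{-\xi} \le f_{D|W}(d|w)/f_D(d) \le e^{\xi}$. Bayes' rule gives the identity
\begin{align*}
\frac{f_{D|W}(d|w)}{f_D(d)} = \frac{f_{W|D}(w|d)}{f_W(w)} = \frac{f_{W|D}(w|d)}{\int f_{W|D}(w|d')\,f_D(d')\,\der d'},
\end{align*}
so the posterior-to-prior ratio is exactly the likelihood $f_{W|D}(w|d)$ divided by a prior-weighted average of likelihoods $f_{W|D}(w|\cdot)$.

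\textbf{First direction ($\xi$-LDP $\Rightarrow$ $\xi$-MBP).} First I would fix $w$ and $d$ and bound the denominator $f_W(w) = \int f_{W|D}(w|d')\,f_D(d')\,\der d'$. Under $\xi$-LDP every integrand satisfies $e^{-\xi} f_{W|D}(w|d) \le f_{W|D}(w|d') \le e^{\xi} f_{W|D}(w|d)$, and since $f_D$ is a probability density integrating to one, the average $f_W(w)$ lies in the same interval $[e^{-\xi} f_{W|D}(w|d),\, e^{\xi} f_{W|D}(w|d)]$. Dividing through by $f_W(w)$ and invoking the Bayes identity above immediately yields $e^{-\xi} \le f_{D|W}(d|w)/f_D(d) \le e^{\xi}$, i.e. $\xi$-MBP. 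This direction is essentially a one-line averaging argument.

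\textbf{Second direction ($\xi$-MBP $\Rightarrow$ $2\xi$-LDP).} Here I would form the likelihood ratio between two datasets $d_1$ and $d_2$ and factor it through the prior:
\begin{align*}
\frac{f_{W|D}(w|d_1)}{f_{W|D}(w|d_2)} = \frac{f_{D|W}(d_1|w)\,f_W(w)/f_D(d_1)}{f_{D|W}(d_2|w)\,f_W(w)/f_D(d_2)} = \frac{f_{D|W}(d_1|w)/f_D(d_1)}{f_{D|W}(d_2|w)/f_D(d_2)}.
\end{align*}
By $\xi$-MBP the numerator of the last expression is at most $e^{\xi}$ and the denominator is at least $e^{-\xi}$, so the whole ratio is at most $e^{2\xi}$; symmetrically it is at least $e^{-2\xi}$. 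This is exactly $2\xi$-LDP.

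\textbf{Main obstacle.} The calculations themselves are short; the delicate point is matching conventions so the statement is actually true as written. Specifically, one must use the same notion of ``neighbouring'' inputs on both sides (the LDP record and the MBP secret $D$ must be the same object), and one must be careful that the prior $f_D$ has full support so the denominator $f_W(w)$ is neither zero nor infinite and the divisions above are legitimate — this is the role of the ``bounded prior distributions'' hypothesis flagged in the surrounding text. I would state this support/boundedness assumption explicitly at the start of the proof, cite Theorem~1 of \cite{jiang2018context} for the original argument, and then present the two averaging/factoring steps above.
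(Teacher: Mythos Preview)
Your proposal is correct. The paper itself does not give a proof of this lemma: it is stated as ``adapted from theorem~1 of \cite{jiang2018context}'' and only used as a black box (e.g., in the proof of \pref{thm: second_main_result_mt}); the appendix section labelled \texttt{appendix: ldp\_implies\_mbp\_app} contains some descriptive prose about the setting but no argument. Your Bayes-rule derivation --- averaging the likelihood over the prior for the LDP\,$\Rightarrow$\,MBP direction, and factoring the likelihood ratio into a quotient of two posterior-to-prior ratios for the MBP\,$\Rightarrow$\,$2\xi$-LDP direction --- is exactly the standard proof one finds in the cited reference, so there is nothing to compare against within this paper.
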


This theorem establishes a quantitative equivalence between the Bayesian and differential privacy frameworks. The first implication shows that LDP mechanisms automatically provide Bayesian privacy guarantees, while the converse demonstrates that Bayesian private mechanisms yield differential privacy. The $\epsilon$ parameter quantifies the deviation from uniform priors, with smaller values tightening the equivalence bounds. This result enables direct comparison between privacy frameworks and allows practitioners to leverage results from one paradigm in the other.

The next theorem quantifies how worst-case privacy protection (MBP) bounds expected privacy leakage (ABP):

\begin{thm}[Generalized Relationship Between MBP and ABP]\label{thm:relation_avg_and_max_privacy}
Assume the attacker's prior belief satisfies $\frac{f^{\calO}_D(d)}{f_D(d)} \in [e^{-\epsilon}, e^{\epsilon}]$. Any $\epsilon_{p,m}$-MBP mechanism bounds the ABP leakage by:
\begin{align*}
\epsilon_{p,a} \leq \frac{1}{\sqrt{2}}\sqrt{(\epspm + \epsilon)\left(e^{\epspm + \epsilon}-1\right)}
\end{align*}
\end{thm}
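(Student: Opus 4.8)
The plan is to reduce the ABP bound to a single pointwise estimate on the log-density ratio, exploiting the JS-to-KL decomposition recorded in \pref{defi: average_privacy_JSD} and the fact that both \pref{defi: bayesian_privacy} and the hypothesis of the theorem are statements about bounded density ratios. Throughout I work with discrete $D$ (the continuous case only replaces sums by integrals), and I fix a realization $w$ of the released parameter, so that $F^{\calA}$ is the posterior $f_{D\mid W}(\cdot\mid w)$; if an average over $w$ is intended in $\epsilon_{p,a}$, joint convexity of the JS divergence gives the same bound for the averaged posterior.

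\textbf{Step 1 (compose the two ratio bounds).} Write
\[
\frac{f^{\calA}_D(d)}{f^{\calO}_D(d)} \;=\; \frac{f_{D\mid W}(d\mid w)}{f_D(d)}\cdot\frac{f_D(d)}{f^{\calO}_D(d)} .
\]
By \pref{defi: bayesian_privacy} the first factor lies in $[e^{-\epspm},e^{\epspm}]$, and by hypothesis the second lies in $[e^{-\epsilon},e^{\epsilon}]$; hence $u(d):=\log\bigl(f^{\calA}_D(d)/f^{\calO}_D(d)\bigr)\in[-s,s]$ with $s:=\epspm+\epsilon$, and in particular $F^{\calA}$ and $F^{\calO}$ are mutually absolutely continuous.

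\textbf{Step 2 (a pointwise calculus lemma).} Show that $0\le \phi(u):=u(e^{u}-1)\le s(e^{s}-1)$ for all $u\in[-s,s]$: nonnegativity holds since $u$ and $e^u-1$ share a sign; and $\phi'(u)=e^{u}(1+u)-1$ is negative on $(-\infty,0)$ and positive on $(0,\infty)$, so the maximum on $[-s,s]$ is attained at an endpoint, with $\phi(s)-\phi(-s)=s(e^{s}+e^{-s}-2)\ge 0$. Applying this with $u=u(d)$ and using $f^{\calA}_D(d)-f^{\calO}_D(d)=f^{\calO}_D(d)(e^{u(d)}-1)$ gives
\[
\text{KL}(F^{\calA}\parallel F^{\calO})+\text{KL}(F^{\calO}\parallel F^{\calA})
=\sum_d \bigl(f^{\calA}_D(d)-f^{\calO}_D(d)\bigr)u(d)
=\sum_d f^{\calO}_D(d)\,\phi(u(d))
\le s(e^{s}-1).
\]

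\textbf{Step 3 (symmetrized KL to JS, then conclude).} By convexity of $Q\mapsto\text{KL}(P\parallel Q)$ applied at $F^{\calM}=\tfrac12(F^{\calA}+F^{\calO})$, we have $\text{KL}(F^{\calA}\parallel F^{\calM})\le\tfrac12\text{KL}(F^{\calA}\parallel F^{\calO})$ and $\text{KL}(F^{\calO}\parallel F^{\calM})\le\tfrac12\text{KL}(F^{\calO}\parallel F^{\calA})$, so the JS decomposition of \pref{defi: average_privacy_JSD} yields ${\text{JS}}(F^{\calA}\parallel F^{\calO})\le\tfrac12\bigl[\text{KL}(F^{\calA}\parallel F^{\calO})+\text{KL}(F^{\calO}\parallel F^{\calA})\bigr]\le\tfrac12 s(e^{s}-1)$. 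Since $\epsilon_{p,a}=\sqrt{{\text{JS}}(F^{\calA}\parallel F^{\calO})}$ and $s=\epspm+\epsilon$, taking square roots gives the stated inequality.

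\textbf{Main obstacle.} The only conceptually delicate step is Step 1: one must correctly recognize that MBP controls $f_{D\mid W}/f_D$ relative to the \emph{true} prior while the hypothesis controls the attacker's prior $f^{\calO}_D$ relative to $f_D$, and that these chain to a clean $e^{\pm(\epspm+\epsilon)}$ bound on $f^{\calA}_D/f^{\calO}_D$ — this is exactly where the prior-deviation assumption earns its keep. Everything afterward is a self-contained estimate. One secondary caution: the convexity argument in Step 3 in fact gives the constant $\tfrac14$ (hence $\tfrac1{2}\sqrt{s(e^{s}-1)}$), so the advertised $\tfrac1{\sqrt2}$ bound holds a fortiori; the proof should simply note this rather than claim tightness.
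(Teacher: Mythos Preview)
Your proof is correct and arrives at the claimed bound (indeed, with a sharper constant), but the route differs from the paper's. Both arguments share your Step~1: chaining the MBP ratio bound on $f_{D\mid W}/f_D$ with the prior-deviation hypothesis on $f_D/f^{\calO}_D$ to obtain $\bigl|\log\bigl(f^{\calA}_D/f^{\calO}_D\bigr)\bigr|\le s:=\epspm+\epsilon$. From there the paper works directly on the JS integrand: it uses the AM--GM inequality $f^{\calA}_D f^{\calO}_D\le (f^{\calM}_D)^2$ to deduce $\log\tfrac{f^{\calA}_D}{f^{\calM}_D}\le\log\tfrac{f^{\calM}_D}{f^{\calO}_D}$, which collapses the JS integrand to $\le\tfrac12\lvert f^{\calA}_D-f^{\calO}_D\rvert\,\bigl|\log\tfrac{f^{\calM}_D}{f^{\calO}_D}\bigr|$, and then bounds the two factors by $(e^{s}-1)f^{\calO}_D$ and $s$ separately. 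You instead pass through the Jeffreys divergence $\text{KL}(F^{\calA}\!\parallel\! F^{\calO})+\text{KL}(F^{\calO}\!\parallel\! F^{\calA})=\E_{F^{\calO}}[\phi(u)]$, bound it pointwise via your calculus lemma $\phi(u)\le s(e^{s}-1)$, and then invoke convexity of $Q\mapsto\text{KL}(P\!\parallel\! Q)$ to get $\text{JS}\le\tfrac14(\text{Jeffreys})$. Your detour is slightly longer but more modular, and it buys the improved constant $\tfrac12$ in place of $\tfrac{1}{\sqrt2}$, as you correctly note. One small cleanup: for the averaged-posterior case you do not need joint convexity of JS; the simpler observation (which the paper also uses) is that the pointwise density-ratio bound $e^{-s}\le f_{D\mid W}(d\mid w)/f^{\calO}_D(d)\le e^{s}$ is preserved under averaging in $w$, so Step~1 applies verbatim to the averaged $F^{\calA}$ and Steps~2--3 go through unchanged.
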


This result establishes that worst-case privacy protection (MBP) implies bounds on average-case privacy leakage (ABP). The bound tightens as $\epsilon_{p,m}$ decreases, showing that strong MBP guarantees necessarily enforce strong ABP protection. The $\epsilon$ parameter captures prior distribution mismatch, with exact priors ($\epsilon=0$) yielding the tightest bound. This theorem provides crucial guidance for mechanism design: by optimizing for MBP, designers automatically obtain ABP guarantees, simplifying the privacy certification process.

\subsection{PAC Robustness of Bayesian Private Algorithms}

We now establish the foundational connection between privacy protection and algorithmic robustness. The following theorem demonstrates that privacy mechanisms inherently confer robustness properties:

\begin{thm}[PAC Robustness of $\epsilon$-MBP Protection Mechanisms]\label{thm: second_main_result_mt}
Any $(\epsilon,\beta,\alpha)$-MBP mechanism $\calM$ (Definition~\ref{defi:PACPrivacy}) is $\alpha$-PAC-Robust (Definition~\ref{defi: robust_pac}) with probability $1-\gamma$ where $\gamma = (1+4\epsilon)\beta$.
\end{thm}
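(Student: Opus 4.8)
The plan is to exploit the fact that $\epsilon$-MBP forces the output distribution of $\calM$ to change only mildly, in a multiplicative sense, when the input dataset is altered, and then to transport the standard-accuracy guarantee from a clean dataset to every perturbed dataset through this stability. The first step is to rewrite \pref{defi: bayesian_privacy} in likelihood form: by Bayes' rule $\frac{f_{D|W}(d|w)}{f_D(d)} = \frac{f_{W|D}(w|d)}{f_W(w)}$, so $\epsilon$-MBP is equivalent to $e^{-\epsilon} \le \frac{f_{W|D}(w|d)}{f_W(w)} \le e^{\epsilon}$ for every output $w \in \calW$ and every dataset $d$. Comparing any two datasets $\calS$ and $\calS'$ yields the pointwise density bound $f_{W|D}(w\mid\calS') \le e^{2\epsilon}\, f_{W|D}(w\mid\calS)$ for all $w$ — precisely the ``$\xi$-MBP $\Rightarrow 2\xi$-LDP'' implication of \pref{lem: reltion_BP_DP_mt} — and integrating over any measurable event $A \subseteq \calW$ gives $\Pr[\calM(\calS')\in A] \le e^{2\epsilon}\Pr[\calM(\calS)\in A]$, the probabilities being over the internal randomness of $\calM$. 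Note that no ``neighbouring datasets'' restriction is needed, because the MBP bound is quantified over all $d$, which is exactly what makes this applicable to the arbitrary perturbation $\calS'$ of \pref{defi: robust_pac}.

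Next I would take $A = \{w : \|w-\mu(P)\| > \alpha\}$, let $\calS'$ be an arbitrary perturbed dataset and $\calS$ a clean dataset drawn from $P$. The first step gives $\Pr[\|\calM(\calS')-\mu(P)\|>\alpha] \le e^{2\epsilon}\,\Pr[\|\calM(\calS)-\mu(P)\|>\alpha]$; since the left-hand side does not depend on $\calS$, I can average the right-hand side over the draw of $\calS$ and invoke the standard-accuracy clause of the $(\epsilon,\beta,\alpha)$-MBP hypothesis (\pref{defi:PACPrivacy}, through \pref{def:standard_accuracy}), which bounds $\Pr_{\calS,\calM}[\|\calM(\calS)-\mu(P)\|>\alpha] \le \beta$. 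Hence $\Pr[\|\calM(\calS')-\mu(P)\|>\alpha] \le e^{2\epsilon}\beta$ for every perturbed $\calS'$, which is exactly the failure-probability quantity appearing in \pref{defi: robust_pac}.

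Finally I would replace $e^{2\epsilon}$ by $1+4\epsilon$ using the elementary bound $e^{x} \le 1+x+x^2$ on the relevant range, so that $e^{2\epsilon} \le 1+2\epsilon+4\epsilon^2 \le 1+4\epsilon$ in the small-leakage regime $\epsilon \le \tfrac12$. This yields $\Pr[\|\calM(\calS')-\mu(P)\|>\alpha] \le (1+4\epsilon)\beta = \gamma$ uniformly over perturbed datasets, which is precisely the statement that $\calM$ is $\alpha$-PAC-Robust with confidence $1-\gamma$.

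I expect the main obstacle to be the probability bookkeeping in the middle step: the pointwise density comparison is a statement conditional on a fixed input dataset, whereas the accuracy guarantee averages over the draw of the clean dataset together with the mechanism's randomness, so one must be careful to condition and average in the right order, and to confirm that the perturbation $\calS'$ is genuinely unrestricted so that the ``for all perturbed $\calS'$'' quantifier of \pref{defi: robust_pac} is actually met. The constant manipulation in the last step is routine once the regime $\epsilon \le \tfrac12$ is acknowledged.
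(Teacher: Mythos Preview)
Your proposal is correct and follows essentially the same route as the paper's proof: convert $\epsilon$-MBP to $2\epsilon$-LDP via \pref{lem: reltion_BP_DP_mt} (you spell out the Bayes-rule argument that underlies it), apply the resulting likelihood ratio to the bad event $\{\|w-\mu(P)\|>\alpha\}$, invoke standard accuracy to bound the clean-dataset failure probability by $\beta$, and finish with $e^{2\epsilon}\le 1+2\epsilon+4\epsilon^2\le 1+4\epsilon$. Your explicit averaging over the clean dataset and your tighter range $\epsilon\le\tfrac12$ for the last inequality are in fact slightly more careful than the paper's write-up, which asserts $0\le\epsilon\le 1$ for that step.
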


This profound result establishes that privacy-preserving mechanisms inherently exhibit robustness to dataset perturbations. The proof (provided in Appendix) shows that the information-limiting properties of private mechanisms constrain how much outputs can vary under input changes, thereby enforcing stability. The robustness confidence parameter $\gamma$ depends explicitly on the privacy parameter $\epsilon$, revealing a quantitative trade-off: stronger privacy ($\epsilon \downarrow$) directly improves robustness guarantees ($\gamma \downarrow$). This provides a principled approach to designing robust algorithms through privacy mechanisms, with applications in adversarial learning and secure data publishing.

The theorem bridges two previously disconnected research domains: privacy preservation and adversarial robustness. By demonstrating that privacy mechanisms inherently provide robustness certificates, it enables joint optimization of both properties. This has significant implications for designing next-generation trustworthy AI systems that require both data protection and operational reliability. The explicit parameter relationships guide practitioners in balancing privacy-robustness trade-offs for specific application requirements.

\section{Theoretical Relationship between Data Privacy and Input Robustness}\label{sec: connection_data_privacy_and_input_robustness}
In this section, we introduce the measurements for data privacy and input robustness. These metrics offer a nuanced perspective on the algorithm's performance in safeguarding sensitive information and its ability to withstand certain distortions.

\subsection{Measurements of Data Privacy}
First, we introduce the measurement for data privacy.
\begin{defi}[Data Privacy]\label{defi: data_privacy}
       Let $x_t^{(i)}$ represent the $i$-th data recovered by the attacker at the $t$-th round of the optimization algorithm, and $x_o^{(i)}$ represent the $i$-th original data. The privacy leakage is measured using  
\begin{align}
    \epsilon_p & = 1 - \E\left[\frac{1}{|\calD_{\protector}|}\sum_{ i = 1}^{|\calD_{\protector}|}\frac{1}{T}\sum_{t = 1}^T \frac{||x_t^{(i)} - x_o^{(i)}||}{D}\right],
\end{align}
where the expectation is taken with respect to the randomness of the mini-batch $\calD_{\protector}$.
\end{defi}

\subsection{Measurements of Input Robustness}
In assessing the robustness of a machine learning algorithm, we adhere to the following definition.

\begin{defi}[Input Robustness]\cite{kawaguchi2022robustness,yi2021improved} \label{defi: input_robustness}
   Let $P$ represent a distribution. A model $\calA$ is $(r, \epsilon, P)$-input-robust, if it holds that
\begin{align}
       \E_{x\sim P}[\text{sup}_{\|\delta\|_{2}\le r} |\calA\left(w, (x + \delta, y)\right) - \calA\left(w, (x,y) \right)|]\le\epsilon,
   \end{align}
where $x$ denotes the data (training sample or feature), and $y$ denotes the corresponding label.
\end{defi}
\textbf{Remark:} The robustness measures the model performance in terms of the data distortion $\|\delta\|$.

\subsection{Technical Assumptions}\label{subsec:technical_assumptions}
The theoretical analysis relies on three fundamental regularity conditions that establish the mathematical foundation for our privacy-robustness framework. These assumptions are standard in optimization theory yet critical for establishing our main theoretical results.

\begin{assumption}[L-Lipschitz Gradient Continuity]\label{assump: lipschitz_condition}
The gradient mapping of the loss function $\nabla\mathcal{L}: \mathbb{R}^d \to \mathbb{R}^m$ satisfies:
\begin{equation}
\|\nabla\mathcal{L}(\theta, x) - \nabla\mathcal{L}(\theta, y)\|_2 \leq L\|x - y\|_2
\end{equation}
for all $x, y \in \mathcal{X} \subset \mathbb{R}^d$ and fixed model parameters $\theta$, where $L > 0$ denotes the Lipschitz constant.
\end{assumption}

\begin{remark}
This standard condition in optimization theory ensures bounded gradient variation under input perturbations. The Lipschitz constant $L$ quantifies the maximum rate of gradient change relative to input variations, which is fundamental for convergence analysis and stability guarantees in adversarial settings.
\end{remark}

\begin{assumption}[Bi-Lipschitz Gradient Condition]\label{assump: two-sided Lipschitz}
There exist constants $0 < c_a \leq c_b < \infty$ such that for all $(x_1,y), (x_2,y) \in \mathcal{X} \times \mathcal{Y}$ with $\|x_1 - x_2\|_2 \leq D$:
\begin{equation}
c_a \|x_1 - x_2\|_2 \leq \|\nabla\mathcal{L}(\theta, (x_1, y)) - \nabla\mathcal{L}(\theta, (x_2, y))\|_2 \leq c_b \|x_1 - x_2\|_2
\end{equation}
where $D$ denotes the maximum pairwise data distance.
\end{assumption}

\begin{remark}
This condition strengthens Assumption \ref{assump: lipschitz_condition} by establishing metric equivalence between data space and gradient space \citep{royden1968real}. The lower bound ($c_a > 0$) ensures gradient injectivity necessary for data reconstruction, while the upper bound ($c_b$) maintains controlled sensitivity. This holds for strongly convex losses with Lipschitz gradients and prevents pathological landscape scenarios.
\end{remark}

\begin{assumption}[Optimal Regret Bound for Gradient Matching]\label{assump: bounds_for_optimization_alg}
The attacker's reconstruction algorithm achieves $\Theta(\sqrt{T})$ regret:
\begin{equation}
c_0\sqrt{T} \leq \sum_{t=1}^T \|\nabla\mathcal{L}(\theta, (x_t, y_t)) - \nabla\mathcal{L}(\theta, (\tilde{x}, y_t))\|_2 \leq c_2\sqrt{T}
\end{equation}
where $T$ is the attack horizon, $x_t$ the reconstruction at step $t$, and $\tilde{x}$ the target satisfying $\nabla\mathcal{L}(\theta, (\tilde{x}, y)) = \tilde{g}$.
\end{assumption}

\begin{remark}
This $\Theta(\sqrt{T})$ regret bound reflects practical attack capabilities with state-of-the-art optimizers:
\begin{align*}
\textbf{BFGS \citep{liu1989limited}}: \quad & O\left(\kappa \sqrt{T}\right)\\
\textbf{AdaGrad \citep{duchi2011adaptive}}: \quad & O\left(\max\left\{\log d, d^{1-\alpha/2}\right\}\sqrt{T}\right), \alpha \in (1,2) \\
\textbf{Adam \citep{kingma2014adam}}: \quad & O\left(\log d \cdot \sqrt{T}\right) 
\end{align*}
where $d$ is data dimension and $\kappa$ the condition number. The regret bound enables tractable analysis while capturing real-world attack efficiency.
\end{remark}

\subsection{Theoretical Relationship between Data Privacy and Input Robustness}

This section establishes a rigorous theoretical connection between data privacy and input robustness in machine learning algorithms. We introduce precise mathematical measures for both privacy leakage and adversarial robustness, providing a unified framework to analyze an algorithm's dual capabilities in protecting sensitive information while maintaining performance under input perturbations. Our central contribution is a quantitative trade-off theorem demonstrating that privacy-preserving mechanisms fundamentally influence an algorithm's resilience to adversarial manipulations. This theoretical foundation has profound implications for designing next-generation algorithms that simultaneously optimize both privacy guarantees and robustness properties.

The following lemma establishes a foundational bound on privacy leakage in terms of parameter distortion. It demonstrates that when a semi-honest attacker attempts to reconstruct the original training data from protected model parameters, the achievable privacy leakage is explicitly constrained by the magnitude of intentional distortions introduced during parameter protection.

\begin{lem}[The Relationship between Data Privacy and Parameter Distortion \cite{zhang2023probably}]\label{lem: bound_for_privacy_leakage_mt}
Assume that the semi-honest attacker employs an optimization algorithm to infer client $k$'s original dataset $d$ from the released parameter $W$. Let $g(d) = W$ represent the parameter generation function, where $\breve d$ denotes the dataset corresponding to the original parameter $\breve W$, and $g(d) = \partial \mathcal{L} (W, d)/\partial W$ computes the gradient of the loss function. Define the parameter distortion metric as $\Delta = \frac{1}{|\mathcal{D}^{(k)}|}\sum_{m=1}^{|\mathcal{D}^{(k)}|}||g(d^{(m)}) - g(\breve d^{(m)})||$, where $d^{(m)}$ and $\breve d^{(m)}$ denote the $m$-th data points in their respective datasets. When the expected regret of the optimization algorithm over $I > 0$ rounds is $\Theta(I^p)$ and $\Delta \geq \frac{2c_2 c_b}{c_a}\cdot I^{p-1}$, the privacy leakage $\epsilon_p$ satisfies:
\begin{align}
    \epsilon_p \le 1 - \frac{c_a\cdot\Delta + c_a\cdot c_0\cdot I^{p-1}}{4D}.
\end{align}
\end{lem}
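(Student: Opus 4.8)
\quad Since $\epsilon_p = 1 - \frac1D\,\E\big[\frac1{|\calD_{\protector}|}\sum_i \frac1T\sum_{t=1}^T \|x_t^{(i)} - x_o^{(i)}\|\big]$ by \pref{defi: data_privacy}, the asserted inequality is equivalent to the lower bound $\E[\,\text{average reconstruction error}\,] \ge \tfrac{c_a}{4}\big(\Delta + c_0 I^{p-1}\big)$ on the bracketed quantity. The plan is to prove this pointwise — for each data index and each realization of the mini-batch — and then average. The key object is the dataset $\tilde d$ that the attacker's gradient-matching optimizer can actually converge to, namely the preimage $g(\tilde d) = W$ of the \emph{released} parameter: the optimizer only ever observes $W$, so its iterates chase $\tilde d$, not the true $\breve d$ with $g(\breve d) = \breve W$. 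That $\tilde d$ is well-defined and quantitatively separated from $\breve d$ is exactly where two-sided metric control of $g$ (\pref{assump: two-sided Lipschitz}) enters: the lower Lipschitz side makes $g$ injective, the upper side bounds how far the preimage can move.

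For each round $t$ I would write the triangle inequality $\|x_t^{(i)} - x_o^{(i)}\| \ge \|\tilde d^{(i)} - x_o^{(i)}\| - \|x_t^{(i)} - \tilde d^{(i)}\|$ and bound the two pieces separately. Since $g(\tilde d^{(i)}) = W^{(i)}$ and $g(x_o^{(i)}) = \breve W^{(i)}$, the upper Lipschitz bound of \pref{assump: two-sided Lipschitz} gives $\|\tilde d^{(i)} - x_o^{(i)}\| \ge \|W^{(i)} - \breve W^{(i)}\|/c_b$, which on averaging over client $k$'s data is precisely $\Delta/c_b$ — this is the irreducible, distortion-induced error. For the optimization residual $\|x_t^{(i)} - \tilde d^{(i)}\|$, applying the lower Lipschitz bound round by round to the $\Theta(I^p)$ regret hypothesis (the $I^p$ version of \pref{assump: bounds_for_optimization_alg}, with $T=I$) yields $\frac1T\sum_t \|x_t^{(i)} - \tilde d^{(i)}\| \le \frac{c_2}{c_a} I^{p-1}$; the matching lower side $c_0 I^p$ of the regret is what later resurfaces as the additive $c_0 I^{p-1}$ term.

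Combining, the averaged reconstruction error is at least $\Delta/c_b - (c_2/c_a) I^{p-1}$, and the hypothesis $\Delta \ge \tfrac{2c_2 c_b}{c_a} I^{p-1}$ forces $(c_2/c_a) I^{p-1} \le \Delta/(2c_b)$, leaving at least $\Delta/(2c_b)$. Splitting this as $\Delta/(4c_b) + \Delta/(4c_b)$ and spending the second half through the hypothesis again — converting a piece of $\Delta$ into $\ge \tfrac{c_0 c_b}{c_a} I^{p-1}$ using $c_0 \le c_2$ — produces a bound of the form $\tfrac{c_a}{4}\big(\Delta + c_0 I^{p-1}\big)$ once the constant factors are collected (possibly up to a harmless rescaling of $\calL$). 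Taking expectations over the mini-batch — every inequality above is deterministic given the batch, and the regret hypothesis is already stated in expectation — and dividing by $D$ gives $\epsilon_p \le 1 - \frac{c_a \Delta + c_a c_0 I^{p-1}}{4D}$.

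The step I expect to fight hardest is the second one: reconciling the \emph{aggregate} regret guarantee — one statement about the summed gradient-matching error over all rounds (and, depending on reconstruction granularity, over the whole dataset) — with the \emph{per-point, per-round} distances that enter both \pref{defi: data_privacy} and the definition of $\Delta$, while keeping each of $c_a, c_b, c_0, c_2$ in its correct slot. Concretely, the crux is arguing rigorously that the attacker stays within an $O(I^{p-1})$-average neighborhood of $\tilde d$ (and never wanders close enough to $\breve d$ to drive the reconstruction error down) and that the distortion gap $\|\tilde d - \breve d\|$ genuinely does not collapse below $\Delta/c_b$; once those are secured, the hypothesis on $\Delta$ renders the remaining estimate purely mechanical.
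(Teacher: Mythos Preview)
Your approach is essentially the paper's: split $\|x_t^{(i)} - \breve d^{(i)}\|$ by the triangle inequality into a distortion term $\|\tilde d^{(i)} - \breve d^{(i)}\|$ and an optimization residual $\|x_t^{(i)} - \tilde d^{(i)}\|$, then convert each to gradient space via the bi-Lipschitz assumption and invoke the $\Theta(I^p)$ regret bound. The paper then uses the hypothesis $\Delta \ge \frac{2c_2 c_b}{c_a} I^{p-1}$ to argue $c_a\Delta - c_2 c_b I^{p-1} \ge \tfrac12\max\{c_a\Delta,\, c_2 c_b I^{p-1}\} \ge \tfrac14(c_a\Delta + c_2 c_b I^{p-1})$, exactly your ``split in half and spend the second half through the hypothesis'' maneuver.

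One discrepancy worth flagging: your constants come out as $\Delta/c_b - (c_2/c_a)I^{p-1}$ rather than $c_a\Delta - c_2 c_b I^{p-1}$, because the paper's appendix silently restates the bi-Lipschitz assumption in the inverted form $c_a\|g(d_1)-g(d_2)\| \le \|d_1-d_2\| \le c_b\|g(d_1)-g(d_2)\|$, swapping the roles of $c_a,c_b$ relative to the main-text \pref{assump: two-sided Lipschitz} you (correctly) cite. With that appendix convention the distortion term picks up the factor $c_a$ directly and the residual picks up $c_b$, which is how the stated bound is meant to be read. Also, the paper's own derivation in fact lands on $\frac{c_a\Delta + c_2 c_b I^{p-1}}{4D}$ rather than the $\frac{c_a\Delta + c_a c_0 I^{p-1}}{4D}$ printed in the lemma; the lower regret constant $c_0$ is never actually used in the argument, so your instinct that something has to be fudged (``possibly up to a harmless rescaling'') to make $c_0$ appear is well-founded --- it is a typo in the statement, not a gap in your reasoning.
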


This lemma provides a crucial theoretical guarantee: intentional distortion in model parameters directly translates to provable privacy protection. The bound establishes that sufficient parameter distortion ($\Delta$) forces the privacy leakage $\epsilon_p$ below a computable threshold, creating a mathematically verifiable shield against training data reconstruction attacks. This formalizes the counterintuitive insight that strategically introduced parameter noise not only preserves privacy but does so in a quantifiable manner governed by the algorithm's regret behavior and distortion magnitude.

Building upon this foundation, we now present our main theorem which establishes a fundamental connection between privacy leakage and input robustness. The following result characterizes how the privacy-preserving modifications made to an algorithm inherently affect its resilience against input perturbations. Specifically, it derives a functional relationship $f(\epsilon_p)$ that precisely quantifies how privacy guarantees influence robustness properties.

\begin{thm}[Input Robustness of Data Private Algorithms]\label{thm: first_main_result}   
Let $\mathcal{A}$ be a protection mechanism with privacy leakage $\epsilon_p$ measured according to Definition~\ref{defi: data_privacy}, and robustness characterized by Definition~\ref{defi: input_robustness}. The robustness parameter $\alpha$ quantifying $\mathcal{A}$'s resilience against adversarial inputs is given by the expression:
\begin{align}
    \alpha = \frac{Cr}{2} + \frac{4D(1-\epsilon_p) - c_2 c_b I^{p-1}}{2c_a}
\end{align}
where $r$ denotes the robustness radius, $C$ represents the Lipschitz constant, $D$ bounds the parameter space, $\mathcal{D}$ is the dataset domain, and $c_a$, $c_b$ are algorithm-dependent constants.
\end{thm}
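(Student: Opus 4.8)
\medskip
\noindent\textbf{Proof sketch.} The plan is to obtain $\alpha$ as the end of a two-link chain: \emph{(a)} convert the privacy budget $\epsilon_p$ of Definition~\ref{defi: data_privacy} into a quantitative bound on the parameter distortion $\Delta$ by inverting Lemma~\ref{lem: bound_for_privacy_leakage_mt}, and \emph{(b)} convert that gradient-space distortion into a bound on the input-space output variation $\E_{x\sim P}\big[\sup_{\|\delta\|_2\le r}|\calA(w,(x+\delta,y)) - \calA(w,(x,y))|\big]$ appearing in Definition~\ref{defi: input_robustness}. I would start from this functional and split the output perturbation by the triangle inequality into a part generated purely by the adversarial shift $\delta$ with $\|\delta\|_2\le r$ and a part generated by the displacement between the data reconstructed under the optimization attack and the true data, which the privacy mechanism is forced to inflate.

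For the first part, Assumption~\ref{assump: lipschitz_condition} (Lipschitz-gradient regularity with constant $C$) bounds the sensitivity of $\calA(w,\cdot)$ to the feature: a shift of size $\|\delta\|_2\le r$ moves the output by at most a $C$-multiple of the shift, and symmetrizing the perturbation about the clean point (so that each of $(x+\delta,y)$ and $(x,y)$ lies within $r/2$ of the midpoint) contributes the leading term $Cr/2$.

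For the second part, I would apply Lemma~\ref{lem: bound_for_privacy_leakage_mt} in the $\Theta(I^p)$-regret regime: its inequality $\epsilon_p\le 1-\tfrac{c_a\Delta+c_ac_0I^{p-1}}{4D}$ rearranges to $\Delta\le\tfrac{4D(1-\epsilon_p)}{c_a}-c_0I^{p-1}$, an explicit handle on $\Delta$ through $1-\epsilon_p$. Pushing this through the bi-Lipschitz gradient condition (Assumption~\ref{assump: two-sided Lipschitz}), which converts the gradient-space distortion $\Delta$ into a data-space displacement by dividing by the lower constant $c_a$, and using the upper regret bound scaled by $c_b$ (Assumption~\ref{assump: bounds_for_optimization_alg}) to account for the residual optimization-horizon mass $c_2c_bI^{p-1}$, yields the second term $\tfrac{4D(1-\epsilon_p)-c_2c_bI^{p-1}}{2c_a}$. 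Summing the two parts gives the stated value of $\alpha$; a final check that $\Delta\ge\tfrac{2c_2c_b}{c_a}I^{p-1}$, the threshold hypothesis of Lemma~\ref{lem: bound_for_privacy_leakage_mt}, both legitimizes the use of the lemma and guarantees $\alpha\ge Cr/2\ge 0$.

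The main obstacle is the constant bookkeeping in the second part: one must move cleanly among three spaces — the gradient space in which $\Delta$ is defined, the data space in which reconstruction error (and hence $\epsilon_p$) is measured, and the output space of $\calA$ — while applying the regret and bi-Lipschitz inequalities in precisely the direction that produces an upper rather than a lower bound, and while keeping $c_a,c_b,c_0,c_2$ and the horizon factor $I^{p-1}$ in the exact positions dictated by the claimed formula. The $\tfrac12$ normalizations in both terms and the precise placement of the subtracted $c_2c_bI^{p-1}$ are the delicate points; the remainder is a routine chain of triangle inequalities and substitutions.
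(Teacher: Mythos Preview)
Your overall plan---invert Lemma~\ref{lem: bound_for_privacy_leakage_mt} to control $\Delta$, then push through to an output-variation bound---matches the paper's architecture. But the mechanism by which you obtain the two $\tfrac12$ factors does not work, and this is not just constant bookkeeping.

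You propose a \emph{triangle-inequality split} of the output perturbation into an ``adversarial-$\delta$ part'' plus a ``reconstruction-displacement part'', each contributing one halved term. The symmetrization you describe for the first part does not produce $Cr/2$: routing through the midpoint $m=x+\delta/2$ gives $|\calA(x+\delta)-\calA(x)|\le C\|\delta\|/2 + C\|\delta\|/2 = C\|\delta\|\le Cr$, not $Cr/2$. Likewise there is no natural $\tfrac12$ on the second term from a triangle split. A decomposition into two additive pieces yields a sum of full-size bounds, not an average.

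The paper does something structurally different. It \emph{identifies} the adversarial perturbation with the privacy-induced distortion---that is, it takes $(X+\delta,y)$ and $(X,y)$ to play the roles of $d^{(m)}$ and $\breve d^{(m)}$, so that
\[
\|\nabla\calL(\theta,(X+\delta,y))-\nabla\calL(\theta,(X,y))\| \;=\; \Big\|\tfrac{1}{|\calD^{(k)}|}\sum_m\big(g(d^{(m)})-g(\breve d^{(m)})\big)\Big\|.
\]
This single quantity then admits \emph{two independent upper bounds}: the Lipschitz bound $C\|\delta\|\le Cr$ from Assumption~\ref{assump: lipschitz_condition}, and the distortion bound $\tfrac{4D(1-\epsilon_p)-c_2c_bI^{p-1}}{c_a}$ obtained from the privacy--distortion relationship. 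Since both bound the \emph{same} quantity, their arithmetic mean is also an upper bound, and that is where the $\tfrac12$'s come from. No splitting, no symmetrization---just $A\le B_1$ and $A\le B_2$ imply $A\le(B_1+B_2)/2$.

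A secondary point: the $1/c_a$ in the second term arises directly from rearranging the privacy--distortion inequality (it is the $c_a$ multiplying $\Delta$ in Lemma~\ref{lem: bound_for_privacy_leakage_mt}), not from a further bi-Lipschitz conversion of $\Delta$ to data space as you suggest. Once the identification above is made, the gradient-space quantity is already the object being bounded, so no additional passage through Assumption~\ref{assump: two-sided Lipschitz} is needed at that step.
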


This theorem establishes a profound mathematical duality: privacy protection mechanisms inherently enhance algorithmic robustness against input perturbations. The derived expression demonstrates that reduced privacy leakage ($\epsilon_p \downarrow$) directly increases the robustness parameter ($\alpha \uparrow$), revealing an inherent synergy between these seemingly competing objectives. This theoretical insight revolutionizes our understanding of privacy-robustness trade-offs, showing that properly designed privacy mechanisms can simultaneously provide both strong privacy guarantees and enhanced adversarial resilience. The precise functional relationship enables principled co-design of algorithms that jointly optimize these critical properties, opening new avenues for developing trustworthy AI systems. (Complete proof and analysis available in~\ref{app: first_main_result})

\section{Conclusions}

In this paper, we thoroughly investigated the complex interplay between various privacy notions, namely Local Differential Privacy (LDP), Average Bayesian Privacy (ABP), and Maximum Bayesian Privacy (MBP), and their fundamental connection to algorithmic robustness in machine learning. We integrated established theoretical relationships between LDP and MBP, and further derived novel quantitative bounds connecting MBP and ABP. These insights are crucial for designing effective privacy-preserving algorithms. Furthermore, we forged a strong link between privacy and PAC robust learning. We showed that privacy-preserving algorithms inherently exhibit PAC robustness, and conversely, how principles of robust learning can guide the construction of privacy mechanisms. A key result quantifies how privacy leakage directly impacts input robustness.

Our contributions provide a unified and deeper understanding of the synergies and trade-offs between privacy and robustness. This theoretical framework is vital for building machine learning systems that are not only secure against privacy breaches but also resilient to data perturbations and adversarial attacks. Future research will focus on further exploring the intricate privacy-robustness trade-off landscape, developing more efficient and practical algorithms that simultaneously achieve strong guarantees in both areas, and validating these theoretical findings through extensive empirical studies in real-world, high-stakes applications. As machine learning continues to evolve, ensuring both the privacy and robustness of its algorithms will remain paramount.



\clearpage
\onecolumn
\appendix


    


\section{ACKNOWLEDGMENTS}
This work was supported by the National Science and Technology Major Project under Grant 2022ZD0115301.

\bibliography{main}
\bibliographystyle{ACM-Reference-Format}

\newpage

\onecolumn
\appendix
\section{Relationship between LDP and MBP}

\subsection{Theoretical Relationship between MBP and ABP}\label{appendix: ldp_implies_mbp_app}

In this section, we derive the relationship between MBP and ABP.

The adversary has a \textit{prior belief distribution}, denoted as $f_D(d)$, that is very close (within $\epsilon$) to a uniform distribution. In other words, the adversary doesn't have a strong prior bias towards any particular mini-batch data in the whole set $\mathcal{D}$.

There exists a \textit{mapping function} $f_{W|D}(\cdot|\cdot)$, which takes a mini-batch data $d$ and maps it to $w$ such that for any two mini-batch data $d$ and $d'$ in $\mathcal{D}$, the ratio of the conditional probabilities of mapping to $w$ remains within a certain range, specifically between $e^{-\xi}$ and $e^{\xi}$. This means that the mapping process is not overly sensitive to the specific input data.

The mapping function $f_{W|D}(\cdot|\cdot)$ is said to provide a "privacy-preserving" transformation that guarantees a \textit{maximum Bayesian privacy} level of $\xi$. In other words, for any observed value $w$ and any mini-batch data $d$ in $\mathcal{D}$, the logarithm of the ratio of the conditional posterior distribution of $d$ given $w$ to the prior distribution of $d$ (i.e., the Bayesian privacy leakage) does not exceed $\xi$ in absolute value. This means that the mapping function provides strong privacy protection by limiting the information leakage about the original mini-batch data $d$ when the adversary observes $w$.

\section{Relationship Between Average Bayesian Privacy and Maximum Bayesian Privacy}

In this section, we introduce the relationship between average Bayesian privacy and maximum Bayesian privacy.


\begin{thm}[Relationship Between MBP and ABP]\label{thm:relation_avg_and_max_privacy_app}
Assume that the prior belief of the attacker satisfies that
\begin{align*}
\frac{f^{\calO}_D(d)}{f_D(d)}\in [e^{-\epsilon}, e^{\epsilon}].
\end{align*}
Let MBP be defined using \pref{defi: bayesian_privacy}. For any $\epsilon_{p,m}\ge 0$, let $f_{W|D}(\cdot|\cdot)$ be a privacy preserving mapping that guarantees $\epsilon_{p,m}$-maximum Bayesian privacy. That is,
\begin{align*}
    \frac{f_{D|W}(d|w)}{f_D(d)}\in [e^{-\epsilon_{p,m}}, e^{\epsilon_{p,m}}],
\end{align*}
for any $w\in\mathcal W$. Then, we have that the ABP (denoted as $\epsilon_{p,a}$) is bounded by
\begin{align*}
    \epsilon_{p,a} \le \frac{1}{\sqrt{2}}\sqrt{(\epspm + \epsilon)\cdot \left(e^{\epspm + \epsilon}-1\right)},
\end{align*}
where $\epsilon_{p,a}$ is introduced in \pref{defi: average_privacy_JSD}.
\end{thm}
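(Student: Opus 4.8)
The plan is to relate the Jensen–Shannon divergence appearing in the ABP definition to the pointwise multiplicative gap between the posterior $f^{\calA}_D = f_{D|W}(\cdot|w)$ and the prior $f^{\calO}_D$, which the hypotheses control. First I would combine the two given ratio bounds: the MBP hypothesis gives $f_{D|W}(d|w)/f_D(d) \in [e^{-\epspm}, e^{\epspm}]$ and the prior-mismatch hypothesis gives $f^{\calO}_D(d)/f_D(d) \in [e^{-\epsilon}, e^{\epsilon}]$, so dividing yields $f^{\calA}_D(d)/f^{\calO}_D(d) \in [e^{-(\epspm+\epsilon)}, e^{\epspm+\epsilon}]$. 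Writing $\delta := \epspm + \epsilon$, this is the single clean bound I will carry through the rest of the argument: the posterior and the reference prior are within a factor $e^{\pm\delta}$ of each other pointwise.

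Next I would bound the JS divergence by a chi-square-type quantity. The standard route is: $\text{JS}(F^{\calA}\parallel F^{\calO}) \le \tfrac12\,\text{KL}(F^{\calA}\parallel F^{\calO})$ (since the JS is an average of two KL terms each bounded by the KL to the mixture, and more simply $\text{JS} \le \tfrac12\text{KL}$ between the two endpoints), and then $\text{KL}(F^{\calA}\parallel F^{\calO}) \le \chi^2(F^{\calA}\parallel F^{\calO})$ is not quite what I want because I need an explicit $e^\delta$ factor. Instead I would use the elementary bound $\text{KL}(P\parallel Q) \le \int P\log\frac{P}{Q}$ together with the fact that $\log(P/Q) \le \delta$ and $\int (P-Q) = 0$, giving $\text{KL}(P\parallel Q) = \int P\log\frac{P}{Q} = \int (P-Q)\log\frac{P}{Q} + \int Q\log\frac{P}{Q} \le \delta\int|P-Q| + 0$ after handling signs carefully; combined with the reverse Pinsker-type inequality $\int|P-Q| \le (e^\delta-1)\int Q = e^\delta - 1$ (valid when the density ratio is bounded by $e^\delta$), this yields $\text{JS} \le \tfrac12\,\delta\,(e^\delta-1)$, hence $\epsilon_{p,a} = \sqrt{\text{JS}} \le \tfrac{1}{\sqrt2}\sqrt{\delta(e^\delta-1)} = \tfrac{1}{\sqrt2}\sqrt{(\epspm+\epsilon)(e^{\epspm+\epsilon}-1)}$, which is exactly the claimed bound.

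The main obstacle, and the step I would spend the most care on, is the middle inequality chaining $\text{JS}$ to $\delta(e^\delta-1)$: one must be careful that the $\log(P/Q)$ factor can be negative where $P<Q$, so the naive bound $\log(P/Q)\le\delta$ alone does not immediately control $\int(P-Q)\log(P/Q)$ — however, on the region $P>Q$ the integrand $(P-Q)\log(P/Q)$ is nonnegative and bounded by $(P-Q)\delta$, while on the region $P<Q$ it is also nonnegative and bounded by $(Q-P)\delta$ using $|\log(P/Q)|\le\delta$ from the two-sided ratio bound; adding these gives $\int(P-Q)\log(P/Q) \le \delta\int|P-Q|$, and since $\text{KL}(P\parallel Q) = \int P\log(P/Q) \le \int(P-Q)\log(P/Q)$ (because $\int Q\log(P/Q)\le \log\int P = 0$ by Jensen, actually one needs $\int Q \log(P/Q) = -\text{KL}(Q\parallel P) \le 0$), the bound $\text{KL}(P\parallel Q)\le\delta(e^\delta-1)$ follows once $\int|P-Q|\le e^\delta-1$ is established. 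For the latter I would note $|P-Q| = |P/Q - 1|\,Q \le (e^\delta-1)Q$ pointwise when $P/Q\in[e^{-\delta},e^\delta]$ (using $1-e^{-\delta}\le e^\delta-1$), and integrate. Finally, tying $\text{JS}\le\tfrac12\text{KL}(F^{\calA}\parallel F^{\calO})$: this is immediate from the decomposition $\text{JS} = \tfrac12[\text{KL}(F^{\calA}\parallel F^{\calM}) + \text{KL}(F^{\calO}\parallel F^{\calM})]$ together with the convexity estimate $\text{KL}(F^{\calA}\parallel F^{\calM}) + \text{KL}(F^{\calO}\parallel F^{\calM}) \le \text{KL}(F^{\calA}\parallel F^{\calO})$ (a known property of JS divergence, or provable directly via the log-sum inequality), so collecting constants produces the stated $\tfrac1{\sqrt2}$ prefactor.
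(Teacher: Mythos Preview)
Your argument is correct and lands on exactly the paper's bound, but the route differs slightly from the paper's. You pass through the intermediate inequality $\text{JS}(F^{\calA}\parallel F^{\calO})\le\tfrac12\,\text{KL}(F^{\calA}\parallel F^{\calO})$ (which indeed follows from the identity $\text{KL}(P\parallel Q)-2\,\text{JS}(P\parallel Q)=2\,\text{KL}(M\parallel Q)\ge 0$), and then bound KL by $\int(P-Q)\log(P/Q)\le\delta\int|P-Q|\le\delta(e^{\delta}-1)$. The paper instead bounds JS directly: using AM--GM it observes that pointwise $\frac{f^{\calA}_D}{f^{\calM}_D}\le\frac{f^{\calM}_D}{f^{\calO}_D}$, which lets it replace $\log\frac{f^{\calA}_D}{f^{\calM}_D}$ by $\log\frac{f^{\calM}_D}{f^{\calO}_D}$ in the first KL term and collapse the JS expression to $\tfrac12\int|f^{\calA}_D-f^{\calO}_D|\,\big|\log\tfrac{f^{\calM}_D}{f^{\calO}_D}\big|$, then finishes with the same $|\cdot|\le\delta$ and $\int|P-Q|\le e^{\delta}-1$ estimates. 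Your route is a bit more modular (it recycles the standard $\text{JS}\le\tfrac12\text{KL}$ fact); the paper's is a shade more direct (one chain of pointwise inequalities without invoking a named lemma). Both are equally clean.

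One small imprecision worth fixing: you identify $f^{\calA}_D$ with the posterior $f_{D|W}(\cdot\mid w)$ for a fixed $w$, but in the paper $f^{\calA}_D(d)=\int_{\calW} f_{D|W}(d\mid w)\,dP_W(w)$ is the attacker's belief marginalized over the released $w$. The paper accordingly derives the ratio bound $f^{\calA}_D/f^{\calO}_D\in[e^{-\delta},e^{\delta}]$ by taking the maximizing/minimizing $w$ inside the integral. Your conclusion is unaffected, since the pointwise MBP bound holds for every $w$ and is preserved under averaging, but you should state this marginalization step rather than equate $f^{\calA}_D$ with a single-$w$ posterior.
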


\begin{proof}
Let $F^{\calM} = \frac{1}{2}(F^\calA+ F^{\calO})$. We have

\begin{align*}
JS(F^\calA || F^{\calO}) & = \frac{1}{2}\left[KL\left(F^\calA || F^{\calM}\right) + KL\left(F^{\calO} || F^{\calM}\right)\right]\\
& = \frac{1}{2}\left[\int_\mathcal{S} f^{\calA}_D(d)\log\frac{f^{\calA}_D(d)}{f^{\calM}_D(d)}\textbf{d}\mu(d) + \int_\mathcal{S} f^{\calO}_{D}(d)\log\frac{f^{\calO}_{D}(d)}{f^{\calM}_D(d)}\textbf{d}\mu(d)\right]\\
& = \frac{1}{2}\left[\int_\mathcal{S} f^{\calA}_D(d)\log\frac{f^{\calA}_D(d)}{f^{\calM}_D(d)}\textbf{d}\mu(d) - \int_\mathcal{S} f^{\calO}_{D}(d)\log\frac{f^{\calM}_D(d)}{f^{\calO}_{D}(d)}\textbf{d}\mu(d)\right]\\
&\le \frac{1}{2}\left[\int_\mathcal{D} \left|f^{\calA}_D(d) - f^{\calO}_D(d)\right|\left|\log\frac{f^{\calM}_D(d)}{f^{\calO}_D(d)}\right|\textbf{d}\mu(d)\right],
\end{align*}
where the inequality is due to $\frac{f^{\calA}_D(d)}{f^{\calM}_D(d)}\le \frac{f^{\calM}_D(d)}{f^{\calO}_{D}(d)}$.

Therefore, we have that
\begin{align*}
    JS(F^\calA || F^{\calO})\le \frac{1}{2}\left[\int_\mathcal{D} \left|f^{\calA}_D(d) - f^{\calO}_D(d)\right|\left|\log\frac{f^{\calM}_D(d)}{f^{\calO}_D(d)}\right|\textbf{d}\mu(d)\right].
\end{align*}

Let $w^{\star} = \arg\max_{w\in\mathcal W} f_{D|W}(d|w)$. Then we have that
\begin{align}\label{eq: relation_fa_f0}
    f^\calA_D(d) &= \int_{\mathcal W} f_{D|W}(d|w)dP^{\calD}_W(w)\nonumber\\
    &\le f_{D|W}(d|w^{\star})\int_{\mathcal W}dP^{\calD}_W(w)\nonumber\\
    &\le e^{\epsilon_{p,m}}\cdot f_D(d)\nonumber\\
    &\le e^{\epspm + \epsilon}\cdot f^{\calO}_D(d).
\end{align}

Similarly, let $w_{l} = \arg\min_{w\in\mathcal W} f_{D|W}(d|w)$. Then we have that
\begin{align}\label{eq: relation_fa_f0_lower_bound}
    f^\calA_D(d) &= \int_{\mathcal W} f_{D|W}(d|w)dP^{\calD}_W(w)\nonumber\\
    &\ge f_{D|W}(d|w_{l})\int_{\mathcal W}dP^{\calD}_W(w)\nonumber\\
    &\ge e^{-\epsilon_{p,m}}\cdot f_D(d)\nonumber\\
    &\ge e^{-(\epspm + \epsilon)}\cdot f^{\calO}_D(d).
\end{align}

Combining \pref{eq: relation_fa_f0} and \pref{eq: relation_fa_f0_lower_bound}, we have
\begin{align*}
    \left|\log\left(\frac{f^\calA_D(d)}{f^{\calO}_D(d)}\right)\right|\le \epspm + \epsilon.
\end{align*}

Now we analyze according to the following two cases.

Case 1: $f^{\calO}_D(d)\le f^\calA_D(d)$. Then we have
\begin{align*}
    \left|\log\left(\frac{f_{D}^{\calM}(d)}{f^{\calO}_D(d)}\right)\right| = \log\left(\frac{f_{D}^{\calM}(d)}{f^{\calO}_D(d)}\right)\le\log\left(\frac{f^\calA_D(d)}{f^{\calO}_D(d)}\right)\le\left|\log\left(\frac{f^\calA_D(d)}{f^{\calO}_D(d)}\right)\right|\le \epspm + \epsilon. 
\end{align*}

Case 2: $f^{\calO}_D(d)\ge f^\calA_D(d)$. Then we have
\begin{align*}
        \left|\log\left(\frac{f_{D}^{\calM}(d)}{f^{\calO}_D(d)}\right)\right| = \log\left(\frac{f^{\calO}_D(d)}{f_{D}^{\calM}(d)}\right)\le\log\left(\frac{f^{\calO}_D(d)}{f^\calA_D(d)}\right)\le\left|\log\left(\frac{f^\calA_D(d)}{f^{\calO}_D(d)}\right)\right|\le \epspm + \epsilon.
\end{align*}

Therefore,
\begin{align}\label{eq: log_fA_f0}
    \left|\log\left(\frac{f_{D}^{\calM}(d)}{f^{\calO}_D(d)}\right)\right|\le \epspm + \epsilon.
\end{align}

Combining \pref{eq: relation_fa_f0}, \pref{eq: relation_fa_f0_lower_bound} and \pref{eq: log_fA_f0}, we have that
\begin{align*}
    JS(F^\calA || F^{\calO})&\le \frac{1}{2}\left[\int_\mathcal{S} \left|f^{\calA}_D(d) - f^{\calO}_D(d)\right|\left|\log\frac{f^{\calM}_D(d)}{f^{\calO}_D(d)}\right|\textbf{d}\mu(d)\right]\\
    &\le \frac{1}{2}(\epspm + \epsilon)\cdot \left(e^{\epspm + \epsilon}-1\right).
\end{align*}

Therefore,
\begin{align*}
    \epsilon_{p,a} = \sqrt{JS(F^\calA || F^{\calO})}\le \frac{1}{\sqrt{2}}\sqrt{(\epspm + \epsilon)\cdot \left(e^{\epspm + \epsilon}-1\right)}.
\end{align*}
\end{proof}

\subsection{Theoretical Relationship between PAC Robustness and MBP}\label{app: relation_between_bp_and_ldp}

\begin{thm}[PAC Robustness of $\epsilon$-MBP Protection Mechanisms]\label{thm: second_main_result_app}
     Let $\calM$ be an $(\epsilon,\beta, \alpha)$-MBP (\pref{defi:PACPrivacy}) algorithm. Then $\calM$ is $\alpha$-PAC-Robust with probability $1-\gamma$, where $\gamma\in (0,1)$, $n$ represents the size of the dataset, and $\beta = \frac{\gamma}{1 + 4\epsilon}$ represents the error probability.
\end{thm}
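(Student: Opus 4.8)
The plan is to derive PAC robustness from two ingredients that the $(\epsilon,\beta,\alpha)$-MBP hypothesis of Definition~\ref{defi:PACPrivacy} supplies: (i) the standard-accuracy clause, which controls the output of $\calM$ on the base (unperturbed) dataset $\calS$; and (ii) the $\epsilon$-MBP clause of Definition~\ref{defi: bayesian_privacy}, which I will convert into an ``all-pairs'' indistinguishability statement strong enough to transport that accuracy guarantee to \emph{any} perturbed dataset $\calS'$.

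First I would rewrite the MBP condition via Bayes' rule. Writing $f_W$ for the marginal density of the released parameter, one has $f_{D\mid W}(d\mid w)/f_D(d) = f_{W\mid D}(w\mid d)/f_W(w)$, so the hypothesis $e^{-\epsilon}\le f_{D\mid W}(d\mid w)/f_D(d)\le e^{\epsilon}$ \emph{for all} $d$ and $w$ is equivalent to $e^{-\epsilon}\le f_{W\mid D}(w\mid d)/f_W(w)\le e^{\epsilon}$; dividing this bound for two datasets $d,d'$ gives $f_{W\mid D}(w\mid d')\le e^{2\epsilon}\,f_{W\mid D}(w\mid d)$ for every output $w$ and every pair of datasets. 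This is exactly the ``$\epsilon$-MBP $\Rightarrow$ $2\epsilon$-LDP'' direction of Lemma~\ref{lem: reltion_BP_DP_mt}, but crucially in the strong form that the likelihood ratio is controlled between \emph{arbitrary} (not merely neighbouring) datasets. Integrating over any measurable event $E\subseteq\mathcal{W}$ then yields $\Pr[\calM(\calS')\in E]\le e^{2\epsilon}\,\Pr[\calM(\calS)\in E]$ for every base dataset $\calS$ and every perturbation $\calS'$.

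Next I would instantiate $E$ as the robustness-failure event $E=\{w\in\mathcal{W}:\|w-\mu(P)\|>\alpha\}$. The standard-accuracy clause of Definition~\ref{defi:PACPrivacy}, i.e.\ Definition~\ref{def:standard_accuracy} with estimand $\mu=\mu(P)$, gives $\Pr[\calM(\calS)\in E]\le\beta$ on the unperturbed dataset, whence the previous step gives $\Pr[\calM(\calS')\in E]\le e^{2\epsilon}\beta$ uniformly over all perturbations $\calS'$; taking the outer expectation over the perturbation distribution (and, if one wishes, over $\calS\sim P^{n}$) changes nothing, since the bound is uniform in $\calS'$. Finally, using $e^{2\epsilon}\le 1+4\epsilon$ --- valid in the relevant small-$\epsilon$ regime, $\epsilon\lesssim 0.6$ --- gives $\Pr[\|\calM(\calS')-\mu(P)\|\le\alpha]\ge 1-(1+4\epsilon)\beta$, which is precisely $\alpha$-PAC-Robustness (Definition~\ref{defi: robust_pac}) with confidence $1-\gamma$ for $\gamma=(1+4\epsilon)\beta$.

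The only genuinely delicate point is the conversion in the second paragraph: one must invoke $\epsilon$-MBP in its ``for all $d$'' form so that indistinguishability holds between \emph{arbitrary} datasets, because an adversarial perturbation $\calS'$ may differ from $\calS$ in many coordinates --- a one-coordinate LDP reading would recover this only after a group-privacy blow-up by the Hamming distance, which would be vacuous here. The remaining work is bookkeeping: stating at the outset that the estimand of Definition~\ref{def:standard_accuracy} is the population quantity $\mu(P)$ (so that ``standard'' and ``robust'' accuracy target the same object), and recording the elementary inequality $e^{2\epsilon}\le 1+4\epsilon$ that lets one present the confidence loss with the stated constant; everything else is a one-line measure-theoretic estimate.
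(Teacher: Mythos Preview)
Your proposal is correct and matches the paper's proof essentially step for step: convert $\epsilon$-MBP to $2\epsilon$-LDP (the paper cites Lemma~\ref{lem: reltion_BP_DP_mt} where you unpack the same computation via Bayes' rule), transport the standard-accuracy bound on the failure event $E=\{\|w-\mu(P)\|>\alpha\}$ from $\calS$ to an arbitrary $\calS'$ picking up the factor $e^{2\epsilon}$, and finish with $e^{2\epsilon}\le 1+4\epsilon$. Your remark that one needs the \emph{all-pairs} indistinguishability furnished by MBP (not merely a neighbouring-dataset LDP reading) is a welcome clarification absent from the paper, and your stated validity range $\epsilon\lesssim 0.6$ for the final inequality is in fact more accurate than the paper's claimed $0\le\epsilon\le 1$.
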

\begin{proof}
    $\calM$ is an $(\epsilon, \beta, \alpha)$-MBP algorithm for the estimation of statistic $\mu$. We know that the protection mechanism $\calM$ is $\epsilon$-MBP (measured using \pref{defi: bayesian_privacy}). From the relationship between MBP and LDP (\pref{lem: reltion_BP_DP_mt}), we know that $\calM$ is $2\epsilon$-LDP. From the definition of private algorithm, we know that for the original private information $d$, we have that
    
    \begin{align}\label{eq: gap_between_algoutput_and_mu}
        \|\calM(\calS) - \mu(P)\|\le\alpha.
    \end{align}

    Denote $\calU = \{t\in\R^m: \|t - \mu\| > \alpha\}$. Then, we have that
    \begin{align}\label{eq: prob_smaller_than_beta}
        \Pr[\calM(\calS)\in \calU] 
        &= \Pr[\|\calM(\calS) - \mu(P)\|> \alpha]\\
        &\le\beta.
    \end{align} 
    
    For any $d'\neq d$, we have that
    \begin{align}
        \Pr[\|\calM(\calS') - \mu(P)\| > \alpha]&\le e^{2\epsilon}\Pr[\|\calM(\calS) - \mu(P)\|> \alpha]\\
        &\le e^{2\epsilon}\beta\\
        &\le (1 + 2\epsilon + 4\epsilon^2) \beta\\
        &\le (1 + 4\epsilon) \beta\\
        &\le\gamma
    \end{align}
where the first inequality is due to the definition of local differential privacy, the second inequality is due to \pref{eq: prob_smaller_than_beta}, the third and fourth inequalities are due to $0\le\epsilon\le 1$, and the last inequality is due to the relationship between $\beta$ and $\epsilon$.
    
From \pref{defi: robust_pac}, $\calM$ is $\alpha$-PAC-Robust with probability $1-\gamma$.

\end{proof}

\section{Practical Estimation for Maximum Bayesian Privacy} \label{appendix:measure_mbp}

$\xi$ represents the maximum privacy leakage over all possible released information $w$ from the set $\mathcal{W}$ and all the dataset $d$, expressed as
\begin{equation}
\xi = \max_{w\in \mathcal{W}, d} \left|\log\left(\frac{f_{D|W}(d|w)}{f_{D}(d)}\right)\right|.
\end{equation}
When the extent of the attack is fixed, $\xi$ becomes a constant. Given \pref{assump: privacy_over_parameter}, we can infer that
\begin{align}
    \xi & = \max_{d} \left|\log\left(\frac{f_{D|W}(d|w^*)}{f_{D}(d)}\right)\right|.
\end{align}
We approximate $\xi$ by the following estimator:
\begin{equation}
    \hat{\xi} = \max_{d} \left|\log\left(\frac{\hat{f}_{D|W}(d|w^*)}{f_{D}(d)}\right)\right|.
\end{equation}
We define $\kappa_3$ as the maximum Bayesian Privacy $\xi$, and let $\hat{\kappa}_3$ denote its estimate $\hat{\xi}$
\begin{equation}
    \hat{\kappa}_3 = \hat{\xi}.
\end{equation}
The task of estimating the maximum Bayesian Privacy $\xi$ reduces to determining the value of $\hat{f}_{D|W}(d|w_m)$, which will be addressed in the subsequent section.

\subsection{Estimation for $\hat{f}_{{D}|{W}}(d|w_m)$}\label{sec: estimation_for_hat_f}

We first generate a set of models $\{w_m\}_{m=1}^M$ stochastically: to generate the $m$-th model $w_m$, we run an algorithm (e.g., stochastic gradient descent), using a mini-batch $\mathcal{S}$ randomly sampled from $\mathcal{D} = \{z_1, \ldots, z_{|\mathcal{D}|}\}$.

The set of parameters $\{w_m\}_{m=1}^M$ is generated for a given dataset $\mathcal{D}$ by randomly initializing the parameters for each model, iterating a specified number of times, and updating the parameters with mini-batches sampled from the dataset using approaches such as gradient descent. After the iterations, the final set of parameters for each model is returned as $\{w_m\}_{m=1}^{M}$.

We then estimate $\hat{f}_{{D}|{W}}(d|w_m)$ with $\{w_m\}_{m=1}^M$. Recall that the attacker can conduct Bayesian inference attack, i.e., estimate dataset $d$ with $\hat{d}$ that optimizes posterior belief, denoted by:
\begin{equation}
    \hat{d} = \arg\max_{d} \text{log} f_{D|W}(d|w) = \arg\max_{d}\text{log} [f_{W|D}(w|d) \times f^{\mathcal{B}}_D(d)],
\end{equation}
 Empirically, $\hat{f}^{\calRO}_{D}(d)$ can be estimated by \pref{eq: estimate_for_f_o}: $\hat{f}^{\calRO}_{D}(d) = \frac{1}{M}\sum_{m = 1}^{M} \hat{f}_{{D}|{W}}(d|w_m)$. Then We determine the success of attack on an image using a function $\Omega$, which measures the similarity between the distribution of recovered dataset and the original one (e.g. $\Omega$ can be Kullback–Leibler divergence or Cross Entropy). We consider the original dataset $d$ to be successfully recovered if $\Omega(\hat{f}_{D}^{\mathcal{O}}(d), f_D(d)) > t$, where $t$, a pre-given constant, is the threshold for determining successful recovery of the original data.

This algorithm is used to estimate the conditional probability $f_{{D}|{W}}(d|w)$. It takes a mini-batch of data $d$ from client $k$, with batch size $S$, and a set of parameters $\{w_m\}_{m=1}^{M}$ as input. The goal of the algorithm is to estimate the conditional probability $f_{{D}|{W}}(d|w)$ by iteratively updating the parameters $w_m$.

The key steps of the algorithm are as follows:
\begin{itemize}
\item \textbf{Compute the gradients $\nabla w_m$ on each parameter $w_m$}, which is done by calculating the gradient of the loss function $\calL$ with respect to the parameter $w_m$. This is performed using samples from the mini-batch data $d$.
Then \textbf{update the parameters $w_m$ using gradient descent}, where $w_m'$ represents the updated parameters obtained by subtracting the product of the learning rate $\eta$ and the gradient $\nabla w_m$ from $w_m$.

\item For each parameter $w_m$, iterate the following steps for $T$ times:
\textbf{Generate perturbed data $\Tilde{d}$} by applying the Backward Iterative Alignment (BIA) method with the parameters $w_m'$ and the gradient $\nabla w_{m}$.

\item \textbf{Determine whether the original data $z_d$ is recovered} by comparing it with the perturbed data $\Tilde{z}_{d}$, based on a defined threshold $\Omega$ that measures the difference between them.
Accumulate the count of recovered instances for each data sample $z_d$ in the variable $M_d^m$.

\item \textbf{Estimate the conditional probability $\hat f_{{D}|{W}}(d|w_m)$} for each data sample $z_d$ in the dataset $d$ by dividing $M_d^m$ by the product of the total number of iterations $T$ and the batch size $S$, i.e., $\bm{\frac{M_d^m}{S\cdot T}}$.
Return the estimated conditional probabilities $\hat f_{{D}|{W}}(d|w_m)$ for each parameter $w_m$.
\end{itemize}

The core idea of this algorithm is to estimate the conditional probability by generating perturbed data and comparing it with the original data during the parameter update process. By performing multiple iterations and accumulating counts, the algorithm provides an estimation of the conditional probability.

\subsection{Estimation Error of MBP}
To streamline the error bound analysis, it is presumed that both the data and model parameters take on discrete values.

The following lemma states that the estimated value $\hat\kappa_1(d)$ is very likely to be close to the true value $\kappa_1(d)$. The probability of the estimate being within an error margin that is a small percentage ($\epsilon$ times) of the true value is very high—specifically, at least $1 - 2\exp(-\frac{\epsilon^2 T \kappa_1(d)}{3})$. This means that the larger the number $T$ or the smaller the error parameter $\epsilon$, the more confident we are that our estimate is accurate. For the detailed analysis, please refer to \cite{zhang2023meta}.
\begin{lem}
With probability at least $1 - 2\exp\left(\frac{-\epsilon^2 T \kappa_1(d)}{3}\right)$, we have that 
\begin{align}
    |\hat\kappa_1(d) - \kappa_1(d)|\le\epsilon\kappa_1(d). 
\end{align}    
\end{lem}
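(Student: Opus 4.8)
The plan is to recognise $\hat\kappa_1(d)$ as an empirical frequency of a Bernoulli event whose true success probability is exactly $\kappa_1(d)=f_D^{\calRO}(d)$, and then apply the multiplicative Chernoff bound.

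First I would pin down the probabilistic model behind the estimation procedure of Section~\ref{sec: estimation_for_hat_f}. Each of the $T$ iterations used to accumulate the recovery count is run with fresh, independent internal randomness (model generation, mini-batch sampling, the perturbation/alignment step), and outputs an indicator $B_t\in\{0,1\}$ of the event ``the target point $d$ is successfully recovered'' as decided by the threshold test on $\Omega$. By the law of total probability, marginalising over the model $w$ and the mini-batch, each $B_t$ is Bernoulli with $\E[B_t]=f_D^{\calRO}(d)=\kappa_1(d)$, and the $B_t$ are mutually independent. Hence $X:=T\,\hat\kappa_1(d)=\sum_{t=1}^{T}B_t$ is $\mathrm{Binomial}(T,\kappa_1(d))$ with mean $\E[X]=T\kappa_1(d)$; if the estimator additionally averages over independently sampled models one could use the larger effective sample size, so reporting the bound in terms of $T$ alone is the conservative choice.

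Next I would invoke the two-sided multiplicative Chernoff bound: for a sum $X$ of independent $\{0,1\}$ variables and any $\epsilon\in(0,1)$,
\begin{align*}
\Pr\big[X\ge(1+\epsilon)\E[X]\big]\le\exp\!\Big(\tfrac{-\epsilon^{2}\E[X]}{3}\Big),\qquad
\Pr\big[X\le(1-\epsilon)\E[X]\big]\le\exp\!\Big(\tfrac{-\epsilon^{2}\E[X]}{2}\Big),
\end{align*}
so a union bound gives $\Pr\big[|X-\E[X]|\ge\epsilon\,\E[X]\big]\le 2\exp(-\epsilon^{2}\E[X]/3)$. Substituting $\E[X]=T\kappa_1(d)$ and dividing the deviation event through by $T$ turns this into $\Pr\big[|\hat\kappa_1(d)-\kappa_1(d)|>\epsilon\kappa_1(d)\big]\le 2\exp(-\epsilon^{2}T\kappa_1(d)/3)$, which is exactly the complement of the claimed statement.

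The main obstacle is not the concentration step, which is textbook, but the modelling step: one has to argue carefully that the algorithm's ``successful recovery'' indicator is an unbiased sample of the event whose probability is the posterior-induced quantity $\kappa_1(d)=f_D^{\calRO}(d)$, i.e. that averaging out its internal randomness reproduces $f_D^{\calRO}$ rather than some surrogate. This is where the structural assumptions on the reconstruction procedure and the standing assumption that data and parameters are discrete are used; the detailed justification is deferred to the analysis of \cite{zhang2023meta}. A secondary point worth stating explicitly is the requirement $\epsilon\in(0,1)$, which underlies the $\epsilon^{2}/3$ form of the exponent.
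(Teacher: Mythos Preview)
Your proposal is correct and is the natural route: the form of the bound, $2\exp(-\epsilon^{2}T\kappa_1(d)/3)$, is precisely the two-sided multiplicative Chernoff tail for a sum of $T$ independent Bernoulli indicators with mean $\kappa_1(d)$, so recognising $T\hat\kappa_1(d)$ as such a sum and invoking Chernoff is the argument. The paper itself does not supply a proof of this lemma but defers the detailed analysis to \cite{zhang2023meta}; your sketch is consistent with that deferral, including your observation that the substantive work lies in the modelling step (unbiasedness of the recovery indicator for $f_D^{\calRO}(d)$) rather than the concentration step.
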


\section{Practical Estimation for Average Bayesian Privacy}\label{appendix:measure_abp}

In the scenario of federated learning, \cite{zhang2023meta} derived the upper bound of privacy leakage based on the specific form of $\text{TV}(P^{\calRO} || P^{\calD})$ of $\mathcal{M}$ according to this equation: $\widetilde\epsilon_{p} = 2 C_1 - C_2\cdot {\text{TV}}(P^{\calRO} || P^{\calD})$. In this section, we elaborate on the methods we propose to estimate $C_1$ and $C_2$. First, we introduce the following assumption.
\begin{assumption}\label{assump: privacy_over_parameter}
We assume that the amount of Bayesian privacy leaked from the true parameter is maximum over all possible parameters. That is, for any $w\in\calW$, we have that
\begin{align}
    \frac{f_{D|W}(d|w^*)}{f_{D}(d)}\ge \frac{f_{D|W}(d|w)}{f_{D}(d)},
\end{align}
where $w^*$ represents the true model parameter. 
\end{assumption}

\subsection{Estimation for $C_{1} = \sqrt{{\text{JS}}(F^{\calRO} || F^{\calO})}$}

 $C_1$  quantifies the average square root of the Jensen-Shannon divergence between the adversary's belief distribution about the private information before and after observing the unprotected parameter, independent of the employed protection mechanisms. The JS divergence is defined as:
\begin{align*}
    \text{JS}(F^{\calRO} \parallel F^{\calO}) = \frac{1}{2} \left( \int_{\mathcal{D}} f^{\calRO}_{D}(d) \log\frac{f^{\calRO}_{D}(d)}{\frac{1}{2}(f^{\calRO}_{D}(d) + f^{\calO}_{D}(d))} \, \textbf{d}\mu(d) 
    + \int_{\mathcal{D}} f_{D}(d) \log\frac{f_{D}(d)}{\frac{1}{2}(f^{\calRO}_{D}(d) + f^{\calO}_{D}(d))} \, \textbf{d}\mu(d) \right),
\end{align*}
where \( f^{\calRO}_{D}(d) = \int_{\mathcal{W}^{\calRO}} f_{{D}|{W}}(d|w) \, dP^{\calRO}(w) = \mathbb{E}_w[f_{{D}|{W}}(d|w)] \), and \( f^{\calO}_{D}(d) = f_{D}(d) \).

To estimate \( C_1 \), we first need to estimate the values of \( f^{\calRO}_{D}(d) \) and \( f^{\calO}_{D}(d) \). It is approximated that
\begin{align}
    \hat{f}^{\calRO}_{D}(d) = \frac{1}{M}\sum_{m = 1}^{M} \hat{f}_{{D}|{W}}(d|w_m),\label{eq: estimate_for_f_o}
\end{align}
where \( w_m \) signifies the model parameter observed by the attacker at the \( m \)-th attempt to infer data \( d \) given \( w_m \).

Denoting \( C_{1} = \sqrt{\text{JS}(F^{\calRO} \parallel F^{\calO})} \), and letting \( \kappa_1 = f^{\calRO}_{D}(d) \), \( \kappa_2 = f^{\calO}_{D}(d) = f_{D}(d) \), and \( \hat{\kappa}_1 = \hat{f}^{\calRO}_{D}(d) \), we have
\begin{align*}
    C_{1}^2 = \frac{1}{2}\int_{\mathcal{D}} \kappa_1 \log\frac{\kappa_1}{\frac{1}{2}(\kappa_1 + \kappa_2)} \, d\mu(d) + \frac{1}{2}\int_{\mathcal{D}} \kappa_2 \log\frac{\kappa_2}{\frac{1}{2}(\kappa_1 + \kappa_2)} \, d\mu(d).
\end{align*}

With these estimates, the squared estimated value of \( C_{} \) becomes
\begin{align*}
    \hat{C}_{1}^2 = \frac{1}{2}\int_{\mathcal{D}} \hat{\kappa}_1 \log\frac{\hat{\kappa}_1}{\frac{1}{2}(\hat{\kappa}_1 + \kappa_2)} \, d\mu(d) + \frac{1}{2}\int_{\mathcal{D}} \kappa_2 \log\frac{\kappa_2}{\frac{1}{2}(\hat{\kappa}_1 + \kappa_2)} \, d\mu(d).
\end{align*}

\subsection{Estimation for $C_2 = \frac{1}{2}(e^{2\xi}-1)$}
Recall that the Maximum Bayesian Privacy, is defined as $\xi = \max_{w \in \mathcal{W}, d \in \mathcal{D}} \left| \log \left( \frac{f_{D|W}(d|w)}{f_{D}(d)} \right) \right|$ and represents the maximal privacy leakage across all possible information $w$ released. The quantity $\xi$ remains constant when the extent of the attack is fixed. Under \pref{assump: privacy_over_parameter}, we have that
\begin{align}
    \xi = \max_{d \in \mathcal{D}} \left| \log \left( \frac{f_{D|W}(d|w^*)}{f_{D}(d)} \right) \right|.
\end{align}
The approximation of $\xi$ is given by:
\begin{align}
    \hat{\xi} = \max_{d \in \mathcal{D}} \left| \log \left( \frac{\hat{f}_{D|W}(d|w^*)}{f_{D}(d)} \right) \right|.
\end{align}
Letting $\kappa_3 = \xi$ and $\hat{\kappa}_3 = \hat{\xi}$, we compute $C_2$ as:
\begin{align}
    \hat{C}_2 = \frac{1}{2} \left( e^{2\hat{\kappa}_3} - 1 \right).
\end{align}

Ultimately, the computation of $C_1$ and $C_2$ hinges on the estimation of $\hat{f}_{{D}|{W}}(d|w_m)$, which is further elaborated upon in \pref{sec: estimation_for_hat_f}.

\subsection{Estimation Error of ABP}
In this section, we analyze the estimation error of ABP. To facilitate the analysis of the error bound, we consider that both the data and the model parameter are in discrete form. For the detailed analysis, please refer to \cite{zhang2023meta}.

The following theorem establishes that a bounded estimation error for $\kappa_1(d)$—specifically, not exceeding $\epsilon$ times the value of $\kappa_1(d)$—implies a bounded estimation error for another variable, $C_{1}$. This bound for $C_{1}$ is defined by a specific formula incorporating $\epsilon$ and its logarithmic terms. In summary, the theorem provides a calculated limit for how large the estimation error for $C_{1}$ can be, given the known estimation error for $\kappa_1(d)$.

\begin{thm}[The Estimation Error of $C_{1}$]\label{thm: error_for_C_1}
Assume that $|\kappa_1(d) - \hat\kappa_1(d)|\le\epsilon\kappa_1(d)$. We have that
 \begin{align}
     |\hat C_{1} - C_{1}|\le\sqrt{\frac{(1+\epsilon)\log\frac{1 + \epsilon}{1 - \epsilon}}{2} +  \frac{\epsilon + \max\left\{\log(1 + \epsilon), \log\frac{1}{(1 - \epsilon)}\right\}}{2}}. 
 \end{align}
\end{thm}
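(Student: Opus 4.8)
The plan is to control $|\hat C_1 - C_1|$ by bounding the two integrands that define $\hat C_1^2$ and $C_1^2$ separately, using the multiplicative estimation error hypothesis $\hat\kappa_1(d) \in [(1-\epsilon)\kappa_1(d), (1+\epsilon)\kappa_1(d)]$. First I would write both $C_1^2$ and $\hat C_1^2$ in the decomposed form already displayed in the excerpt: each is $\tfrac12$ times the sum of a ``$\kappa_1$-term'' $\int \kappa_1 \log\frac{\kappa_1}{\frac12(\kappa_1+\kappa_2)}\,d\mu$ (respectively with $\hat\kappa_1$ in place of $\kappa_1$ in the numerator variable and in the average) and a ``$\kappa_2$-term'' $\int \kappa_2 \log\frac{\kappa_2}{\frac12(\kappa_1+\kappa_2)}\,d\mu$ (respectively with $\hat\kappa_1$ in the average). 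Then I would estimate $|\hat C_1^2 - C_1^2|$ termwise and finally pass from a bound on the squares to a bound on $\hat C_1 - C_1$ itself.

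For the $\kappa_1$-term, the idea is to factor $\hat\kappa_1 \log\frac{\hat\kappa_1}{\frac12(\hat\kappa_1+\kappa_2)} - \kappa_1 \log\frac{\kappa_1}{\frac12(\kappa_1+\kappa_2)}$ and bound it using $\hat\kappa_1 \le (1+\epsilon)\kappa_1$ together with monotonicity of the log-ratio: since $\hat\kappa_1 \in [(1-\epsilon)\kappa_1,(1+\epsilon)\kappa_1]$, the ratio $\frac{\hat\kappa_1}{\frac12(\hat\kappa_1+\kappa_2)}$ differs from $\frac{\kappa_1}{\frac12(\kappa_1+\kappa_2)}$ by a multiplicative factor in $[\frac{1-\epsilon}{1+\epsilon}\cdot(\text{something}), \ldots]$; more crudely, one shows $\log\frac{1+\epsilon}{1-\epsilon}$ is an upper bound on the change in the log-ratio and $1+\epsilon$ bounds the density prefactor, giving a contribution $\lesssim (1+\epsilon)\log\frac{1+\epsilon}{1-\epsilon}$ after integrating against the (sub)probability $\kappa_1$. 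For the $\kappa_2$-term only the denominator changes, from $\frac12(\kappa_1+\kappa_2)$ to $\frac12(\hat\kappa_1+\kappa_2)$, so the difference of the two integrands is $\kappa_2\log\frac{\hat\kappa_1+\kappa_2}{\kappa_1+\kappa_2}$, whose absolute value is at most $\kappa_2\max\{\log(1+\epsilon),\log\frac{1}{1-\epsilon}\}$ since $\frac{\hat\kappa_1+\kappa_2}{\kappa_1+\kappa_2}\in[1-\epsilon,1+\epsilon]$ (using $\hat\kappa_1 - \kappa_1 \in [-\epsilon\kappa_1,\epsilon\kappa_1]$ and $\kappa_1 \le \kappa_1+\kappa_2$); integrating gives $\le \max\{\log(1+\epsilon),\log\frac1{1-\epsilon}\}$. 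Collecting the two termwise bounds with the $\tfrac12$ factors yields $|\hat C_1^2 - C_1^2| \le \tfrac12(1+\epsilon)\log\frac{1+\epsilon}{1-\epsilon} + \tfrac12\bigl(\epsilon + \max\{\log(1+\epsilon),\log\frac1{1-\epsilon}\}\bigr)$, where the stray $\epsilon$ absorbs a first-order correction from the $\kappa_1$-term expansion.

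Finally I would convert the bound on $|\hat C_1^2 - C_1^2|$ into a bound on $|\hat C_1 - C_1|$ via $|\hat C_1 - C_1| = \frac{|\hat C_1^2 - C_1^2|}{\hat C_1 + C_1} \le \sqrt{|\hat C_1^2 - C_1^2|}$, which holds because $(\hat C_1 + C_1)^2 \ge |\hat C_1^2 - C_1^2|$ (indeed $(\hat C_1+C_1)^2 \ge |\hat C_1 - C_1|\,(\hat C_1 + C_1) \ge |\hat C_1^2 - C_1^2|$ when $\hat C_1 + C_1 \ge |\hat C_1 - C_1|$, which is automatic for nonnegative quantities). Plugging in the bound from the previous step gives exactly the claimed inequality. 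The main obstacle I anticipate is the $\kappa_1$-term: unlike the $\kappa_2$-term, both the prefactor and the argument of the logarithm move simultaneously, so one must carefully separate ``change due to the prefactor $\hat\kappa_1$ vs.\ $\kappa_1$'' from ``change due to the log-argument,'' control each without losing the clean closed form, and verify that the leftover first-order term is genuinely bounded by $\epsilon$ (rather than something larger) — this is where a naive triangle-inequality split could produce a weaker constant than stated.
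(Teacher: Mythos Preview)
The paper itself does not prove this theorem: it states the result and then refers the reader to \cite{zhang2023meta} for the detailed analysis. So there is no in-paper argument to compare against, and your proposal has to be judged on its own.

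Your overall architecture is sound. The reduction $|\hat C_{1}-C_{1}|\le\sqrt{|\hat C_{1}^{2}-C_{1}^{2}|}$ is correct for nonnegative $\hat C_{1},C_{1}$ (your justification via $\hat C_{1}+C_{1}\ge|\hat C_{1}-C_{1}|$ is fine). The $\kappa_{2}$-term is handled cleanly: the difference of integrands is indeed $\kappa_{2}\log\frac{\kappa_{1}+\kappa_{2}}{\hat\kappa_{1}+\kappa_{2}}$, and since $\frac{\hat\kappa_{1}+\kappa_{2}}{\kappa_{1}+\kappa_{2}}\in[1-\epsilon,1+\epsilon]$ you get exactly $\tfrac12\max\{\log(1+\epsilon),\log\tfrac{1}{1-\epsilon}\}$ after integrating $\kappa_{2}$. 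For the $\kappa_{1}$-term, your observation that the log-ratio moves by at most $\log\frac{1+\epsilon}{1-\epsilon}$ and the prefactor by at most $(1+\epsilon)\kappa_{1}$ does produce the $\tfrac12(1+\epsilon)\log\frac{1+\epsilon}{1-\epsilon}$ piece.

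The gap you yourself flag is real, and it is worth making precise. With the triangle-inequality split
\[
\hat\kappa_{1}\log\tfrac{2\hat\kappa_{1}}{\hat\kappa_{1}+\kappa_{2}}-\kappa_{1}\log\tfrac{2\kappa_{1}}{\kappa_{1}+\kappa_{2}}
=\hat\kappa_{1}\Bigl(\log\tfrac{2\hat\kappa_{1}}{\hat\kappa_{1}+\kappa_{2}}-\log\tfrac{2\kappa_{1}}{\kappa_{1}+\kappa_{2}}\Bigr)
+(\hat\kappa_{1}-\kappa_{1})\log\tfrac{2\kappa_{1}}{\kappa_{1}+\kappa_{2}},
\]
the leftover term is bounded by $\epsilon\int\kappa_{1}\bigl|\log\tfrac{2\kappa_{1}}{\kappa_{1}+\kappa_{2}}\bigr|\,d\mu$. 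The \emph{signed} integral here is a KL term bounded by $\log 2$, but the integral of the absolute value is not obviously $\le 1$: on the region $\{\kappa_{1}<\kappa_{2}\}$ one only gets a bound like $\int\kappa_{1}\log\tfrac{\kappa_{1}+\kappa_{2}}{2\kappa_{1}}\,d\mu\le\int\tfrac{\kappa_{2}}{e}\,d\mu=1/e$ via $t\log(1/t)\le 1/e$, which combined with the $\log 2$ from the other region gives roughly $\log 2+1/e\approx 1.06$, not $1$. So either a sharper pointwise inequality is needed to land exactly on $\epsilon/2$, or the constant in front of $\epsilon$ in the stated bound is slightly generous and the argument in \cite{zhang2023meta} arranges the split differently. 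This is the one place where your sketch does not yet close; the rest of the plan is in good shape.
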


The analysis for the estimation error of $C_2$ depends on the application scenario. To facilitate the analysis, we assume that $C_2$ is known in advance. With the estimators and estimation bound for $C_{1}$, we are now ready to derive the estimation error for privacy leakage.

The following theorem addresses the accuracy of an estimated privacy parameter, $\hat\epsilon_{p}$, in comparison to its true value, $\epsilon_{p}$. The theorem provides a mathematical limit on how much the estimated value $\hat\epsilon_{p}$ can differ from the true privacy parameter $\epsilon_{p}$. The difference between them is not to exceed a certain threshold, which is a complex expression involving the privacy parameter $\epsilon$ itself and logarithmic terms. This threshold is important in applications where it is crucial to know how accurately the privacy parameter can be estimated.

\begin{thm}\label{thm: estimation_error_privacy}
We have that
\begin{align}
    |\hat\epsilon_{p} - \epsilon_{p}|\le\frac{3}{2}\cdot\sqrt{\frac{(1+\epsilon)\log\frac{1 + \epsilon}{1 - \epsilon}}{2} +  \frac{\epsilon + \max\left\{\log(1 + \epsilon), \log\frac{1}{(1 - \epsilon)}\right\}}{2}}. 
\end{align}
\end{thm}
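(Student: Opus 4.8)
The plan is to propagate the already-quantified estimation error of $C_1$ through the closed-form expression for privacy leakage. Recall from \cite{zhang2023meta} that, for the mechanism $\calM$, the privacy leakage admits the representation $\epsilon_p = 2C_1 - C_2\cdot\text{TV}(P^{\calRO}\,\|\,P^{\calD})$, where $C_1=\sqrt{\text{JS}(F^{\calRO}\,\|\,F^{\calO})}$ and $C_2=\tfrac12(e^{2\xi}-1)$. Under the running assumption that $C_2$ is known exactly, the only quantity that must actually be estimated is $C_1$, so the empirical $\hat\epsilon_p$ is obtained by substituting the plug-in estimator $\hat C_1$ (built from $\hat\kappa_1(d)=\hat f^{\calRO}_D(d)$) for $C_1$ in this affine relation.

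First, invoke \pref{thm: error_for_C_1}: the hypothesis $|\kappa_1(d)-\hat\kappa_1(d)|\le\epsilon\,\kappa_1(d)$ pins $\hat\kappa_1(d)$ to the interval $[(1-\epsilon)\kappa_1(d),(1+\epsilon)\kappa_1(d)]$, and the theorem then yields
\[
|\hat C_1 - C_1|\ \le\ \sqrt{\ \tfrac{(1+\epsilon)\log\frac{1+\epsilon}{1-\epsilon}}{2}\ +\ \tfrac{\epsilon+\max\{\log(1+\epsilon),\,\log\frac{1}{1-\epsilon}\}}{2}\ }\ =:\ \sqrt{A}.
\]
Next, expand $\hat\epsilon_p-\epsilon_p$ using the affine relation. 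If the total-variation term is treated as exactly known, this difference is just $2(\hat C_1-C_1)$; if instead an empirical $\widehat{\text{TV}}$ estimated from the same samples is plugged in, the difference is $2(\hat C_1-C_1)-C_2(\widehat{\text{TV}}-\text{TV})$, where $C_2\in[0,\tfrac12(e^{2\xi}-1)]$ multiplies a total-variation quantity in $[0,1]$, so this extra contribution is of the same or lower order and partially offsets the $C_1$ term. In either case the triangle inequality reduces everything to bounding $|\hat C_1-C_1|$, which \pref{thm: error_for_C_1} already controls by $\sqrt{A}$.

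The last step, and the one I expect to be the main obstacle, is the constant accounting: one must show that the coefficients produced by the affine relation — the factor $2$ on the $C_1$ term, offset against the bounded $C_2$-weighted total-variation term, together with the $\tfrac12$ weights that are intrinsic to the Jensen–Shannon decomposition underlying $C_1$ — collapse to exactly $\tfrac32$ rather than a cruder $2$. Concretely this calls either for a sign argument showing that the total-variation estimation error moves in the same direction as $\hat C_1-C_1$, or for a sharper-than-naive bound on that term's contribution; granting this, the stated inequality $|\hat\epsilon_p-\epsilon_p|\le\tfrac32\sqrt{A}$ follows at once, and the remaining work — substituting the explicit form of $A$ and simplifying — is routine.
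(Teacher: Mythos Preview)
The paper does not actually prove this theorem: both \pref{thm: error_for_C_1} and \pref{thm: estimation_error_privacy} are stated in the appendix without proof, with the detailed analysis explicitly deferred to \cite{zhang2023meta}. So there is no in-paper argument to compare against directly; all that is available here is the setup---the relation $\widetilde\epsilon_p = 2C_1 - C_2\cdot\text{TV}(P^{\calRO}\|P^{\calD})$, the standing assumption that $C_2$ is known, and the bound on $|\hat C_1 - C_1|$ from \pref{thm: error_for_C_1}.

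Your plan to propagate the $C_1$ error through that affine expression is exactly what this setup invites, and your first two steps are fine. But the gap you flag is real, and your proposed resolution does not work. If $C_2$ and the total-variation term are both treated as known (which is precisely what the paper assumes), then $\hat\epsilon_p - \epsilon_p = 2(\hat C_1 - C_1)$ identically, and the resulting bound is $2\sqrt{A}$, not $\tfrac32\sqrt{A}$. Your fallback---that an empirical $\widehat{\text{TV}}$ might partially cancel the $C_1$ error---pushes in the wrong direction: by the triangle inequality an additional estimated term can only enlarge the bound, and a ``sign argument'' for systematic cancellation would require a concrete coupling between $\hat C_1 - C_1$ and $\widehat{\text{TV}}-\text{TV}$ that nothing in this paper supplies. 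The $\tfrac12$ weights inside the Jensen--Shannon decomposition are already absorbed into the definition of $C_1=\sqrt{\text{JS}(\cdot\|\cdot)}$ and cannot be extracted a second time as an external factor. In short, from the material actually in this paper your argument justifies a constant $2$ but not $\tfrac32$; whatever produces the sharper constant lives in the cited reference and is not recoverable from what is presented here.
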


\section{Analysis for \pref{thm: first_main_result}}

In this section, we introduce the relationship between privacy leakage and distortion extent.

Now we introduce the definition of privacy leakage. The privacy leakage is measured using the gap between the estimated dataset and the original dataset. The semi-honest attacker uses an optimization algorithm (\cite{zhu2020deep, geiping2020inverting, zhao2020idlg, yin2021see}) to reconstruct the original dataset of the client given the exposed model information. Let $d^{(i)}_m$ represent the reconstructed $m$-th data at iteration $i$. Let $D$ be a positive constant satisfying that $||d^{(i)}_m - d||\in [0,D]$, $\forall i$ and $m$.

The privacy leakage is measured using protection extent and attacking extent. Assume that the semi-honest attacker uses an optimization algorithm to infer the original dataset of client $k$ based on the released parameter $w$. Let $\Delta = ||W - \breve W||$ represent the distortion of the model parameter, where $\breve W$ represents the original parameter, and $W$ represents the protected parameter. The regret of the optimization algorithm in a total of $I$ rounds is $\Theta(I^p)$.

\begin{defi}[Privacy Leakage]
Let $d$ represent the original private dataset, and $d^{(i)}$ represent the dataset inferred by the attacker at iteration $i$, $I$ represent the total number of learning rounds. The privacy leakage $\epsilon_p$ is defined as

\begin{equation}\label{eq: defi_privacy_leakage}
\epsilon_p=\left\{
\begin{array}{cl}
\frac{D - \frac{1}{I}\sum_{t = 1}^T \frac{1}{|\calD^{(k)}|}\sum_{m = 1}^{|\calD^{(k)}|}||d_I^{(m)} - \breve d^{(m)}||}{D}, &  T>0\\
0,  &  T = 0\\
\end{array} \right.
\end{equation}

\textbf{Remark:}\\
(1) We assume that $||d^{(i)} - d||\in [0,D]$. Therefore, $\epsilon_p\in [0,1]$.\\
(2) If the estimated data of the adversary $d^{(i)}$ is equal to the original data $d$, i.e., $d^{(i)} = d$, then the privacy leakage is maximal. In this case, the privacy leakage is equal to $1$.\\
(3) When the adversary does not attack ($I = 0$), the privacy leakage $\epsilon_p = 0$ from \pref{eq: defi_privacy_leakage}.
\end{defi}

To derive bounds for privacy leakage, we need the following assumptions. 
\begin{assumption}
   Assume that $||d||\le 1$.
\end{assumption}

\begin{assumption}
   For any two datasets $d_1$ and $d_2$, assume that $c_a ||g(d_1) - g(d_2)||\le ||d_1 - d_2||\le c_b ||g(d_1) - g(d_2)||$.
\end{assumption}

\begin{assumption}
   Assume that $c_0\cdot I^p \le \sum_{i = 1}^I ||g(d^{(i)}) - g(d)|| = \Theta(I^p) \le c_2\cdot I^p$, where $d^{(i)}$ represents the dataset reconstructed by the attacker at round $I$, $d$ represents the dataset satisfying that $g(d) = w$, and $g(d^{(i)}) = \frac{\partial \calL(d^{(i)}, w)}{\partial w}$ represents the gradient of the reconstructed dataset at round $I$.
\end{assumption}
 
With the regret bounds of the optimization algorithms, we are now ready to provide bounds for privacy leakage. The following lemma presents bounds for privacy leakage, following the analyses in \cite{zhang2023game, zhang2023probably, zhang2023theoretically}.

\begin{lem}\label{lem: bound_for_privacy_leakage_app}
Assume that the semi-honest attacker uses an optimization algorithm to infer the original dataset of client $k$ based on the released parameter $W$. Let $d$ represent the dataset satisfying that  $g(d) = W$, and $\breve d$ represent the dataset satisfying that $g(\breve d) = \breve W$, where $\breve W$ represents the original parameter, $W$ represents the protected parameter, and $g(d) = \frac{\partial \calL (W, d)}{\partial W}$. Let $d^{(m)}$ represent $m$-th data of dataset $d$, $\breve d^{(m)}$ represent $m$-th data of dataset $\breve d$. Let $\Delta = \frac{1}{|\calD^{(k)}|}\sum_{m = 1}^{|\calD^{(k)}|}||g(d^{(m)}) - g(\breve d^{(m)})||$ represent the distortion of the parameter. The expected regret of the optimization algorithm in a total of $I$ ($ I > 0$) rounds is $\Theta(I^p)$.
If $\Delta\ge\frac{2c_2 c_b}{c_a}\cdot I^{p-1}$, then we have that
\begin{align}
    \epsilon_p \le 1 - \frac{c_a\cdot\Delta + c_a\cdot c_0\cdot I^{p-1}}{4D}.
\end{align}
\end{lem}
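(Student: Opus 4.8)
The plan is to chain together the three assumptions to control the quantity $\frac{1}{|\calD^{(k)}|}\sum_{m}\|d_I^{(m)} - \breve d^{(m)}\|$ that appears in the definition of $\epsilon_p$, and then substitute into \pref{eq: defi_privacy_leakage}. First I would observe that the attacker, running the optimization algorithm on the released parameter $W = g(d)$, produces reconstructions $d^{(i)}$ whose gradients $g(d^{(i)})$ converge to $W$ at the regret rate: by Assumption~3, $\sum_{i=1}^I \|g(d^{(i)}) - g(d)\| = \Theta(I^p) \le c_2 I^p$, so by an averaging/pigeonhole argument (or by looking at the last iterate under a suitable monotonicity of the optimizer) there is a round — in particular round $I$ — at which $\|g(d^{(I)}) - g(d)\|$ is of order $c_2 I^{p-1}$. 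The bi-Lipschitz Assumption~2 then converts this gradient-space bound into a data-space bound: $\|d^{(I)} - d\| \le c_b \|g(d^{(I)}) - g(d)\|$, giving an upper estimate of roughly $c_2 c_b I^{p-1}$ for how close the attacker gets to $d$.

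Next I would use the triangle inequality in data space to relate the reconstruction $d_I^{(m)}$ to the \emph{original} (pre-protection) data $\breve d^{(m)}$: $\|d_I^{(m)} - \breve d^{(m)}\| \ge \|d^{(m)} - \breve d^{(m)}\| - \|d_I^{(m)} - d^{(m)}\|$. The first term on the right is lower-bounded via Assumption~2 again — $\|d^{(m)} - \breve d^{(m)}\| \ge c_a \|g(d^{(m)}) - g(\breve d^{(m)})\|$ — and averaging over $m$ produces $c_a \Delta$ by the very definition of the distortion $\Delta$. Combining with the upper bound on $\|d_I^{(m)} - d^{(m)}\|$ from the previous paragraph, and also incorporating the lower regret bound $\sum_i \|g(d^{(i)}) - g(d)\| \ge c_0 I^p$ (which forces the reconstruction not to collapse onto $d$ too fast, contributing the $c_a c_0 I^{p-1}$ term), I would arrive at
\begin{align*}
\frac{1}{|\calD^{(k)}|}\sum_{m}\|d_I^{(m)} - \breve d^{(m)}\| \ge c_a \Delta + c_a c_0 I^{p-1} - (\text{an error term controlled by } c_2 c_b I^{p-1}).
\end{align*}
The hypothesis $\Delta \ge \frac{2 c_2 c_b}{c_a} I^{p-1}$ is exactly what is needed to absorb the error term so that the right-hand side stays nonnegative (and in fact retains the clean form $c_a\Delta + c_a c_0 I^{p-1}$ up to constants), which is why that condition is imposed. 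Plugging this lower bound into $\epsilon_p = \frac{D - \frac{1}{I}\sum_t \frac{1}{|\calD^{(k)}|}\sum_m \|d_I^{(m)} - \breve d^{(m)}\|}{D}$ and simplifying the time-average (the summand is essentially $I$-independent once we track only the last iterate, so the $\frac1I\sum_{t}$ just reproduces the same quantity) yields $\epsilon_p \le 1 - \frac{c_a \Delta + c_a c_0 I^{p-1}}{4D}$, where the extra factor of $4$ in the denominator comes from slack in the triangle-inequality bookkeeping and the $\|d\|\le 1$ normalization of Assumption~1.

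The main obstacle I anticipate is the bookkeeping that turns a \emph{sum} of per-round gradient gaps ($\Theta(I^p)$) into a bound on a \emph{single} representative iterate and then into the time-averaged data-space quantity in the definition of $\epsilon_p$ — the indices $t$ (learning round), $i$ (attack iteration), and $m$ (data point within the mini-batch) all appear and must be disentangled carefully, and the direction of each inequality (upper bound for the attacker's proximity to $d$, lower bound for its distance from $\breve d$) has to be tracked without sign errors. Making the constant $4$ and the threshold $\frac{2c_2 c_b}{c_a}$ come out exactly right, rather than as unspecified constants, is the part that requires the most care; I would reconcile this against the cited analyses in \cite{zhang2023game, zhang2023probably, zhang2023theoretically}, which establish the same template.
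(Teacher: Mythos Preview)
Your overall template---triangle inequality in data space, bi-Lipschitz conversion between data and gradient norms, then the regret bound---matches the paper. But two pieces of your execution diverge from what actually makes the argument close.

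First, the last-iterate/pigeonhole reduction is a detour that does not work as stated and is not what the paper does. The definition of $\epsilon_p$ (despite the typographical $d_I$) is a genuine time-average over attack iterations $i=1,\dots,I$, and the paper keeps that average intact: after the triangle inequality and bi-Lipschitz bounds you get
\[
\frac{1}{I}\sum_{i=1}^{I}\frac{1}{|\calD^{(k)}|}\sum_{m}\|d_i^{(m)} - \breve d^{(m)}\| \;\ge\; c_a\Delta \;-\; c_b\cdot\frac{1}{I}\sum_{i=1}^{I}\frac{1}{|\calD^{(k)}|}\sum_m\|g(d_i^{(m)}) - g(d^{(m)})\|,
\]
and the \emph{upper} regret bound $\sum_i\|g(d_i)-g(d)\|\le c_2 I^p$ applied directly to the inner sum gives $c_a\Delta - c_2 c_b I^{p-1}$ with no pigeonhole and no need to isolate a single iterate. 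Your claim that ``the summand is essentially $I$-independent'' is not justified and is not needed.

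Second, the factor $4$ does not come from Assumption~1 or triangle-inequality slack. It is produced by the chain
\[
c_a\Delta - c_2 c_b I^{p-1} \;\ge\; \tfrac12\,c_a\Delta \;\ge\; \tfrac12\max\{c_a\Delta,\, c_2 c_b I^{p-1}\} \;\ge\; \tfrac14\bigl(c_a\Delta + c_2 c_b I^{p-1}\bigr),
\]
where the first step is exactly the hypothesis $\Delta\ge \frac{2c_2 c_b}{c_a}I^{p-1}$. Note also that the \emph{lower} regret constant $c_0$ never enters this chain: the paper's proof actually terminates with $c_2 c_b I^{p-1}$, not $c_a c_0 I^{p-1}$, so your attempt to ``incorporate the lower regret bound'' to produce the $c_0$ term is chasing a discrepancy between the lemma statement and its proof rather than a genuine step in the argument.
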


\begin{proof}
Recall the privacy leakage of client $k$ $\epsilon_p^{(k)}$ is defined as

\begin{equation}
\epsilon_p^{(k)}=\left\{
\begin{array}{cl}
\frac{D - \frac{1}{I}\sum_{i = 1}^{I} \frac{1}{|\calD^{(k)}|}\sum_{m = 1}^{|\calD^{(k)}|}||d_i^{(m)} - \breve d^{(m)}||}{D}, &  I > 0\\
0,  &  I = 0\\ 
\end{array} \right.
\end{equation}
To protect privacy, client $k$ selects a protection mechanism $M_k$, which maps the original parameter $\breve W$ to a protected parameter $W$. After observing the protected parameter, a semi-honest adversary infers the private information using the optimization approaches. Let $d_i$ represent the reconstructed data at iteration $i$ using the optimization algorithm. Therefore the cumulative regret over $I$ rounds
\begin{align*}
    R(I) & = \sum_{i = 1}^{I} [||g(d_i^{(m)}) - W^{(m)}|| - ||g(d^{(m)}) - W^{(m)}||]\\
    & = \sum_{i = 1}^{I} [||g(d_i^{(m)}) - g(d^{(m)})||]\\
    & = \Theta(I^p).
\end{align*}
Therefore, we have
\begin{align}\label{eq: regret_bounds}
   c_0\cdot I^p \le \sum_{i = 1}^{I}||g(d_i^{(m)}) - g(d^{(m)})|| = \Theta(I^p) \le c_2\cdot I^p,
\end{align}
where $c_0$ and $c_2$ are constants independent of $I$.

Let $x$ and $\wtilde x$ represent two data. From our assumption, we have that
\begin{align}
    c_a ||g(x) - g(\wtilde x)||\le ||x - \wtilde x||\le c_b ||g(x) - g(\wtilde x)||.
\end{align} 

Let $d^{(m)}_i$ represent the reconstructed $m$-th data at iteration $i$ using the optimization algorithm. We have that
\begin{align*}
    \frac{1}{|\calD^{(k)}|}\sum_{m = 1}^{|\calD^{(k)}|}||d_i^{(m)} - \breve d^{(m)}||
    &\ge \frac{1}{|\calD^{(k)}|}\sum_{m = 1}^{|\calD^{(k)}|}||d^{(m)} - \breve d^{(m)}|| - \frac{1}{|\calD^{(k)}|}\sum_{m = 1}^{|\calD^{(k)}|}||d_i^{(m)} - d^{(m)}||\\
    & \ge c_a\cdot\frac{1}{|\calD^{(k)}|}\sum_{m = 1}^{|\calD^{(k)}|}||g(d^{(m)}) - g(\breve d^{(m)})|| - c_b\cdot\frac{1}{|\calD^{(k)}|} \sum_{m = 1}^{|\calD^{(k)}|}||g(d_i^{(m)}) - g(d^{(m)})||\\
     & \ge c_a\cdot||\frac{1}{|\calD^{(k)}|}\sum_{m = 1}^{|\calD^{(k)}|} (g(d^{(m)}) - g(\breve d^{(m)}))|| - c_b\cdot\frac{1}{|\calD^{(k)}|} \sum_{m = 1}^{|\calD^{(k)}|}||g(d_i^{(m)}) - g(d^{(m)})||
\end{align*}
where the second inequality is due to $\frac{1}{|\calD^{(k)}|}\sum_{m = 1}^{|\calD^{(k)}|}||d^{(m)} - \breve d^{(m)}||\ge c_a\cdot\frac{1}{|\calD^{(k)}|}\sum_{m = 1}^{|\calD^{(k)}|}||g(d^{(m)}) - g(\breve d^{(m)})||$ and $\frac{1}{|\calD^{(k)}|}\sum_{m = 1}^{|\calD^{(k)}|}||d_i^{(m)} - d^{(m)}||\le \frac{1}{|\calD^{(k)}|}\sum_{m = 1}^{|\calD^{(k)}|} c_b ||g(d_i^{(m)}) - g(d^{(m)})||$.

We denote $g^{(k)}$ as the original gradient of client $k$, and $\wtilde{g}^{(k)}$ as the distorted gradient uploaded from client $k$ to the server. Therefore, we have that
\begin{align} 
   \|\frac{1}{|\calD^{(k)}|}\sum_{m = 1}^{|\calD^{(k)}|} (g(d^{(m)}) - g(\breve d^{(m)}))\| & = \| \wtilde{g}^{(k)} - g^{(k)}\|\\
   & = \|\frac{1}{|\calD^{(k)}|}\sum_{m = 1}^{|\calD^{(k)}|}(g(\breve d^{(m)}) + \delta^{(k)} - g(\breve d^{(m)}))\|\\
   & = \|\delta^{(k)}\|\\
   & = \Delta^{(k)}.
\end{align}

Therefore, we have
\begin{align*}
    D(1-\epsilon_p^{(k)}) = \frac{1}{I}\sum_{i = 1}^{I} \frac{1}{|\calD^{(k)}|}\sum_{m = 1}^{|\calD^{(k)}|}||d_i^{(m)} - \breve d^{(m)}||
    &\ge  c_a\Delta^{(k)} - c_b\cdot\frac{1}{I}\cdot\frac{1}{|\calD^{(k)}|}\sum_{i = 1}^{I}\sum_{m = 1}^{|\calD^{(k)}|}||g(d_i^{(m)}) - g(d^{(m)})||\\
    &\ge c_a\Delta^{(k)} - c_2\cdot c_b I^{p-1}\\
    &\ge\frac{1}{2}\max\{c_a\Delta^{(k)}, c_2\cdot c_b I^{p-1}\} \\
    &\ge \frac{c_a\Delta^{(k)} + c_2\cdot c_b I^{p-1}}{4}.
\end{align*}
Therefore, we have that
\begin{align*}
    \epsilon_{p}^{(k)} \le 1 - \frac{c_a\Delta^{(k)} + c_2\cdot c_b I^{p-1}}{4D}.
\end{align*}
Therefore, we have that
\begin{align}
   \epsilon_{p}^{(k)} &\le 1 - \frac{c_a\Delta^{(k)} + c_2\cdot c_b I^{p-1}}{4D}\\
    & = 1 - \frac{c_a\|\frac{1}{|\calD^{(k)}|}\sum_{m = 1}^{|\calD^{(k)}|} (g(d^{(m)}) - g(\breve d^{(m)}))\| + c_2\cdot c_b I^{p-1}}{4D}.
\end{align}
\end{proof}

\section{Analysis for \pref{thm: first_main_result}}\label{app: first_main_result}
\begin{thm}[Relationship between Privacy and Robustness]\label{thm: first_main_result_app}   
    Let the privacy leakage of the protection mechanism $\calA$ be measured using \pref{defi: data_privacy}, and let the robustness of $\calA$ be measured using \pref{defi: input_robustness}. Let the privacy leakage be $\epsilon_p$, then $\left(r, \frac{Cr}{2} + \frac{4D(1-\epsilon_p) - c_2 c_b I^{p-1}}{2c_a}\right)$, where $c_a$ and $D$ represent constants, $\RcalD$ represents the dataset, and $\gamma$ represents the error rate.
\end{thm}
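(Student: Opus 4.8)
The plan is to start from \pref{defi: input_robustness} and upper bound the robustness functional $\E_{x\sim P}\big[\sup_{\|\delta\|_2\le r} |\calA(w,(x+\delta,y)) - \calA(w,(x,y))|\big]$ by splitting the output deviation into two independent sources: (i) the sensitivity of the mechanism to the legitimate input perturbation $\delta$, and (ii) the deviation caused by the privacy-induced distortion of the released parameter. For source (i) I would invoke the Lipschitz condition \pref{assump: lipschitz_condition} (or its two-sided refinement \pref{assump: two-sided Lipschitz}): displacing the input by at most $r$ changes the relevant quantity by at most $C r$, and after averaging over $x\sim P$ and absorbing the split this contributes the $\tfrac{Cr}{2}$ term. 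For source (ii) the point is that the released parameter differs from the clean one by the averaged gradient-space distortion $\Delta$ appearing in \pref{lem: bound_for_privacy_leakage_mt}, and the bi-Lipschitz equivalence between data space and gradient space (together with the regret control of \pref{assump: bounds_for_optimization_alg}) converts this into a bound on the output deviation.

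Next I would feed in \pref{lem: bound_for_privacy_leakage_mt}: under the threshold condition $\Delta \ge \frac{2c_2 c_b}{c_a}I^{p-1}$ and a $\Theta(I^p)$ attacker regret, that lemma gives $\epsilon_p \le 1 - \frac{c_a\Delta + c_2 c_b I^{p-1}}{4D}$. Rearranging produces $\Delta \le \frac{4D(1-\epsilon_p) - c_2 c_b I^{p-1}}{c_a}$, i.e. the distortion actually deployed by the protector is pinned down by the measured privacy leakage $\epsilon_p$. Substituting this into the contribution from source (ii) --- carrying the factor $\tfrac12$ that originates from the triangle-inequality split (mirroring the $\max\{\cdot,\cdot\}\le(\cdot)+(\cdot)$ step inside the lemma's own derivation) --- yields exactly $\alpha = \frac{Cr}{2} + \frac{4D(1-\epsilon_p) - c_2 c_b I^{p-1}}{2c_a}$, so that $\calA$ is $(r,\alpha)$-input-robust per \pref{defi: input_robustness}. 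A brief check that the right-hand side is nonnegative, using the same regret-threshold condition, completes the statement.

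The main obstacle I anticipate is step (ii): cleanly linking the gradient-space distortion metric $\Delta$ that controls privacy leakage in \pref{lem: bound_for_privacy_leakage_mt} to the output-space deviation $|\calA(w,(x+\delta,y)) - \calA(w,(x,y))|$ that defines input robustness in \pref{defi: input_robustness}. This requires careful bookkeeping of what $\calA$ denotes (model output versus the gradient map $g(d)=\partial\calL/\partial W$), which norms appear at each stage, and how the constants $c_a,c_b,C$ propagate through the two-sided Lipschitz bound; in particular, reproducing the precise coefficient $\tfrac12$ on \emph{both} terms depends on matching the averaging conventions of \pref{defi: data_privacy} with those used in the proof of \pref{lem: bound_for_privacy_leakage_mt}. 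A secondary, more routine point is to confirm that the expectation over the mini-batch randomness in \pref{defi: data_privacy} interchanges with the supremum over $\|\delta\|_2\le r$, so that the per-sample deviation bound survives the averaging step.
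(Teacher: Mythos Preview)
Your proposal assembles the right ingredients --- the rearranged bound $\Delta \le \frac{4D(1-\epsilon_p) - c_2 c_b I^{p-1}}{c_a}$ from \pref{lem: bound_for_privacy_leakage_mt} and the Lipschitz estimate $C\|\delta\|$ --- but the structural picture you describe does not match the paper's argument and, as stated, would not close.

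Your plan is to decompose the robustness functional into two \emph{separate} contributions: (i) sensitivity to the input shift $\delta$, and (ii) deviation caused by the parameter-level distortion $W$ versus $\breve W$. But \pref{defi: input_robustness} fixes the parameter $w$ and perturbs only the input, so there is no ``source (ii)'' sitting inside $|\calA(w,(x+\delta,y)) - \calA(w,(x,y))|$ to be split off by a triangle inequality. The paper instead proceeds by an \emph{identification}, not a decomposition: it takes $\calA$ to be the gradient map $\nabla\calL$ and lets the perturbed input $X+\delta$ play the role of the data $d$ generating the protected parameter, with $X$ playing the role of $\breve d$. Under this identification the single quantity $\|\nabla\calL(\theta,(X+\delta,y)) - \nabla\calL(\theta,(X,y))\|$ \emph{equals} the distortion $\|\frac{1}{|\calD^{(k)}|}\sum_m (g(d^{(m)}) - g(\breve d^{(m)}))\|$. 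That one quantity is then bounded twice: once by $C\|\delta\|$ via Lipschitz, and once by $\frac{4D(1-\epsilon_p) - c_2 c_b I^{p-1}}{c_a}$ via the rearranged lemma. The $\tfrac12$ factors arise simply from averaging these two simultaneous upper bounds on the \emph{same} object --- if $A\le B_1$ and $A\le B_2$ then $A\le\tfrac12(B_1+B_2)$ --- not from a triangle-inequality split, and not from the $\max\{\cdot,\cdot\}\le(\cdot)+(\cdot)$ step inside the lemma (that step is already absorbed into the lemma's conclusion). Your anticipated obstacle of ``cleanly linking'' the gradient-space distortion $\Delta$ to the output-space deviation is therefore resolved in the paper by this direct identification together with the tacit choice $\calA=\nabla\calL$; your concern about what $\calA$ denotes was exactly on point.
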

\begin{proof}
The distortion extent measures the discrepancy between the gradient of the original data and the distorted data. From the relationship between distortion extent and privacy leakage (\pref{lem: bound_for_privacy_leakage_mt}), we know that the distortion extent is bounded as follows:

\begin{align}\label{eq: distortion_bound}
    \|\frac{1}{|\calD^{(k)}|}\sum_{m = 1}^{|\calD^{(k)}|} (g(d^{(m)}) - g(\breve d^{(m)}))\|\le\frac{4D(1-\epsilon_p) - c_2 c_b I^{p-1}}{c_a}.
\end{align}
On one hand, we have that
\begin{align}
    \| \nabla \calL(\theta, (X + \delta, y)) - \nabla \calL(\theta, (X, y))\| 
    & = \|\frac{1}{|\calD^{(k)}|}\sum_{m = 1}^{|\calD^{(k)}|} (g(d^{(m)}) - g(\breve d^{(m)}))\|\\
    &\le\frac{4D(1-\epsilon_p) - c_2 c_b I^{p-1}}{c_a}.
\end{align}
On the other hand, from Lipschitz's condition, we know that
\begin{align}
    \|\nabla\calL(\theta, (X + \delta, y)) - \nabla\calL(\theta, (X, y))\|\le C\|\delta\|,
\end{align}
where $C$ represents a constant.
Therefore, we have that
\begin{align}
    \|\nabla\calL(\theta, (X + \delta, y)) - \nabla\calL(\theta, (X, y))\|\le \frac{C\|\delta\|}{2} + \frac{4D(1-\epsilon_p) - c_2 c_b I^{p-1}}{2c_a}.
\end{align}
Therefore, we have that
   \begin{align} 
       &\E_{X\sim P}[\text{sup}_{\|\delta\|_p\le r} |\nabla\calL(\theta, (X + \delta, y)) - \nabla\calL(\theta, (X, y))|]\\
       &\le\frac{C\|\delta\|}{2} + \frac{4D(1-\epsilon_p) - c_2 c_b I^{p-1}}{2c_a}\\
       &\le\frac{Cr}{2} + \frac{4D(1-\epsilon_p) - c_2 c_b I^{p-1}}{2c_a}. 
   \end{align}
   
Recall that a model is $(r, \epsilon)$-input-robust, if it holds that
\begin{align}
       \E_{X\sim P}[\text{sup}_{\|\delta\|_p\le r} |\nabla\calL\left(w, (X + \delta, y)\right) - \nabla\calL\left(w, (X,y) \right)|]\le\epsilon,
   \end{align}
where $X$ denotes the data (training sample or feature), and $y$ denotes the corresponding label.

Therefore, from the definition of robustness, we know that the model is $\left(r, \frac{Cr}{2} + \frac{4D(1-\epsilon_p) - c_2 c_b I^{p-1}}{2c_a}\right)$-input-robust.
\end{proof}

\end{document}